\newtheorem{proposition}{Proposition}
\newtheorem{corollary}{Corollary}
\newcommand{\squaresymbol}{\textcolor{red}{\rule{1ex}{1ex}}}
\newcommand{\minlimits}[1]{\min\limits_{#1}}
\title{Episodic Novelty Through Temporal Distance}
\author{%
	Yuhua Jiang$^1$\thanks{Equal contribution. Code is availabe at \url{https://github.com/Jackory/ETD}.} , Qihan Liu$^{1*}$, Yiqin Yang$^2$\footnotemark[2], Xiaoteng Ma$^1$, Dianyu Zhong$^1$, Hao Hu$^1$ \\ 
	\textbf{Jun Yang$^1$\footnotemark[2], Bin Liang$^1$, Bo Xu$^2$ ,  Chongjie Zhang$^3$, Qianchuan Zhao$^1$\footnotemark[2]} \\
    $^1$Tsinghua University \\ 
    $^2$The Key Laboratory of Cognition and Decision Intelligence for Complex Systems, \\ 
    \ \ Institute of Automation, Chinese Academy of Sciences \\
    $^3$Washington University in St. Louis \\
    \texttt{\{jiangyh22, lqh20\}@mails.tsinghua.edu.cn}
}
\begin{document}

\maketitle

\renewcommand{\thefootnote}{\fnsymbol{footnote}}
\footnotetext[2]{Corresponding authors: yiqin.yang@ia.ac.cn, \{yangjun603, zhaoqc\}@tsinghua.edu.cn.}

\begin{abstract}

Exploration in sparse reward environments remains a significant challenge in reinforcement learning, particularly in Contextual Markov Decision Processes (CMDPs), where environments differ across episodes. Existing episodic intrinsic motivation methods for CMDPs primarily rely on count-based approaches, which are ineffective in large state spaces, or on similarity-based methods that lack appropriate metrics for state comparison. To address these shortcomings, we propose \textbf{E}pisodic Novelty Through \textbf{T}emporal \textbf{D}istance (ETD), a novel approach that introduces temporal distance as a robust metric for state similarity and intrinsic reward computation. By employing contrastive learning, ETD accurately estimates temporal distances and derives intrinsic rewards based on the novelty of states within the current episode. Extensive experiments on various benchmark tasks demonstrate that ETD significantly outperforms state-of-the-art methods, highlighting its effectiveness in enhancing exploration in sparse reward CMDPs.

\end{abstract}

\section{Introduction}

Exploration in sparse reward environments remains a significant challenge in reinforcement learning (RL).
Recent approaches have introduced the concept of intrinsic motivation~\citep{schmidhuber1991possibility, oudeyer2007intrinsic} to encourage agents to explore novel states, yielding promising results in sparse reward Markov Decision Processes (MDPs)~\citep{bellemare2016unifying, pathak2017icm, burda2018explorationrnd, machado2020count}.
Most existing methods grounded in intrinsic motivation derive rewards from the agent's cumulative experiences across all episodes.
While these methods are effective in singleton MDPs, where agents are spawned in the same environment for each episode, they exhibit limited generalization across environments~\citep{henaff2022exploratione3b}.
Real-world applications are often more suitably represented by Contextual MDPs (CMDPs)~\citep{henaff2023study}, where different episodes correspond to different environments that nevertheless share certain characteristics, such as procedurally-generated environments~\citep{MinigridMiniworld23, cobbe2020leveragingproceduralgenerationbenchmark, küttler2020nethacklearningenvironment} or embodied AI tasks requiring generalization across diverse spaces~\citep{savva2019habitatplatformembodiedai, li2021igibson, gan2020threedworld, xiang2020sapien}.
In CMDPs, the uniqueness of each episode indicates that experiences from one episode may offer limited insights into the novelty of states in another episode, thereby necessitating the development of more effective intrinsic motivation mechanisms.

To address the challenges of exploration in CMDPs, where episodes differ significantly, several works have introduced \textit{episodic bonuses}~\citep{henaff2023study}. These bonuses are derived from experiences within the current episode, avoiding the generalization limitations of cross-episode rewards.
These approaches can typically be divided into two lines: count-based~\citep{raileanu2020ride,flet2021agac,zha2021rapid,zhang2021noveld,parisi2021interesting,zhang2021made,mu2022improving,ramesh2022exploring} and similarity-based~\citep{savinov2018ec, badia2020never, henaff2022exploratione3b, wan2023deir}.
Count-based methods rely on an episodic count term to generate positive bonuses once encountering a new state but struggle in large or continuous state spaces~\citep{lobel2023flipping}, where each state is unique and episodic bonuses remain uniform across all states.
Meanwhile, similarity-based methods require appropriate measurements between pairs of states, which used to be assessed via Euclidean distance~\citep{badia2020never, henaff2022exploratione3b} or reachable likelihood~\citep{savinov2018ec, wan2023deir} in some latent spaces.
However, these similarity measurements used by existing methods do not provide a suitable metric for capturing the novelty of states, as illustrated in Figure~\ref{fig:spiral_maze}.
This inadequacy undermines the credibility of subsequent intrinsic reward calculations and limits the effectiveness of these methods in complex CMDP environments. Our work addresses this gap by introducing a new metric—temporal distance—that more effectively captures novelty in CMDPs by considering the expected number of steps between states.

In this work, we introduce \textbf{E}pisodic Novelty Through \textbf{T}emporal \textbf{D}istance (ETD), a novel approach designed to encourage agents to explore states that are temporally distant from their episodic history. 
The critical innovation of ETD lies in its use of \textit{temporal distance}—the expected number of environment steps required to transition between two states—as a robust metric for state similarity in intrinsic reward computation. Unlike existing similarity metrics, temporal distance is invariant to state representations, which mitigates issues like the "noisy-TV" problem~\citep{burda2018explorationrnd} and ensures the applicability of ETD in pixel-based environments. We employ contrastive learning with specialized parameterization to accurately estimate the temporal distances between states. 
The intrinsic reward is computed based on the aggregated temporal distances between a new state and each in the episodic memory.
Through extensive experiments on various CMDP benchmark tasks, including MiniGrid~\citep{MinigridMiniworld23}, Crafter~\citep{hafner2022crafter}, and MiniWorld~\citep{MinigridMiniworld23}, we show that ETD significantly outperforms state-of-the-art methods, improving exploration efficiency.

\section{Background}

We consider a contextual Markov Decision Process (CMDP) defined by $(\mathcal{S}, \mathcal{A}, \mathcal{C}, P, r, \mu_C, \mu_S, \gamma)$, where $\mathcal{S}$ is the state space, $\mathcal{A}$ is the action space, $\mathcal{C}$ is the context space, $P:\mathcal{S} \times \mathcal{A} \times \mathcal{C} \rightarrow \Delta(\mathcal{S})$ is the transition function, $r(s_t,a_t,s_{t+1})$ is the reward function and typically sparse, $\mu_S$ is the initial state distribution conditioned on the context, $\mu_C$ is the context distribution, and $\gamma \in (0,1)$ is the reward discount factor. At start at each episode, a context $c$ is sampled from $\mu_C$, followed by an initial state $s_0$ sampled from $\mu_S(\cdot | c)$, and subsequent states are sampled from $s_{t+1} \sim P(\cdot | s_t, a_t, c)$. The goal is to optimize a policy $\pi: \mathcal{S} \rightarrow \Delta(A)$ so that the the expected accumulated reward across over all contexts $\mathbb{E}_{c\sim \mu_C, s_0\sim \mu_S(\cdot | c)}[\sum_t \gamma^t r(s_t, a_t, s_{t+1})]$ is maximized. 

Examples of CMDPs include procedurally generated environments~\citep{MinigridMiniworld23,cobbe2020leveragingproceduralgenerationbenchmark, küttler2020nethacklearningenvironment, hafner2022crafter}, where each context $c$ serves as a random seed for environment generation. Similarly, Embodied AI environments~\citep{MinigridMiniworld23, savva2019habitatplatformembodiedai, gan2020threedworld}, where agents navigate various simulated homes, are also examples of CMDPs. Notably, singleton MDPs ($|C|=1$) represent a special case of CMDPs. We primarily focus on CMDPs with $|C|=\infty$. 

To address the sparse reward challenges, we augment the reward function $r$ by adding an intrinsic reward bonus. The modified equation is $r(s_t, a_t, s_{t+1}) = r_{t}^e + \beta \cdot b_t$, where $r^e_t$ represents the sparse extrinsic reward and $b_t$ denotes the intrinsic reward at each timestep $t$. The hyperparameter $\beta$ controls the influence of the intrinsic reward.


\section{Limitations of Current Episodic Bonuses}

\begin{figure}[t]
    \centering
    \begin{minipage}{0.32\textwidth}
        \centering
        \includegraphics[width= 0.9 \textwidth]{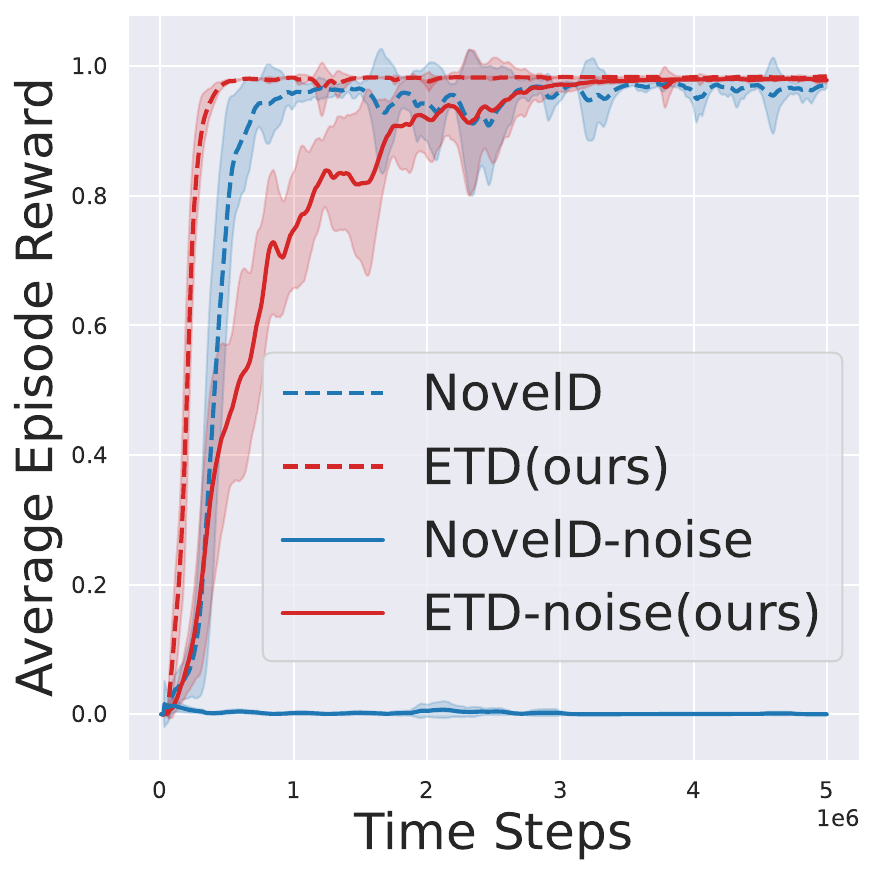}
        \vspace{-5pt}
        \caption{Training curves in \newline Minigrid-DooKey-16x16 (w/w.o. noise).}
        \label{fig:limitation}
    \end{minipage}
    \hfill
    \begin{minipage}{0.65\textwidth}
        \subfigure[Inverse dynamic]{
            \includegraphics[width=0.3\textwidth]{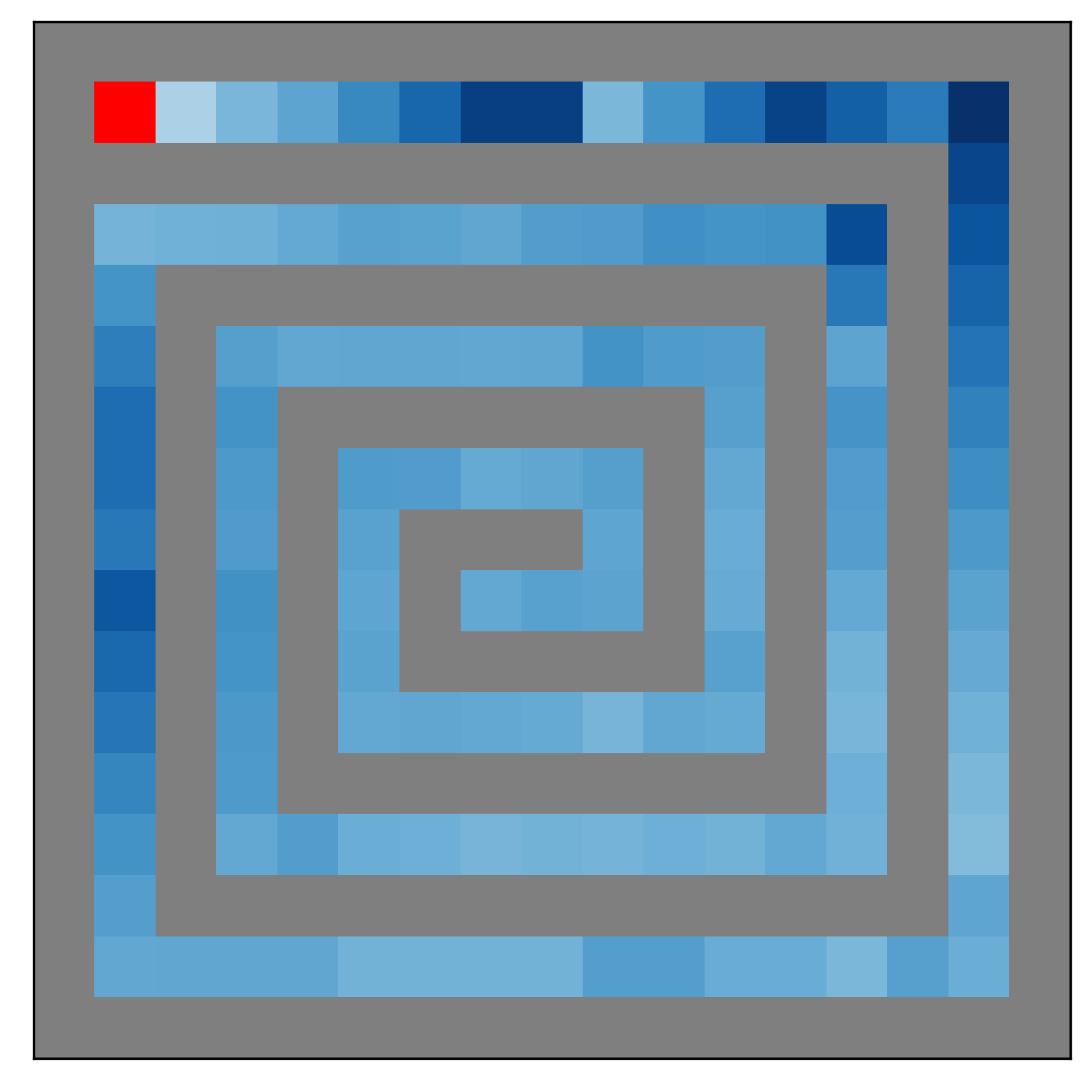}
            \label{fig:inverse_dynamic}
        }
        \hfill
        \subfigure[Likelihood]{
            \includegraphics[width=0.3\textwidth]{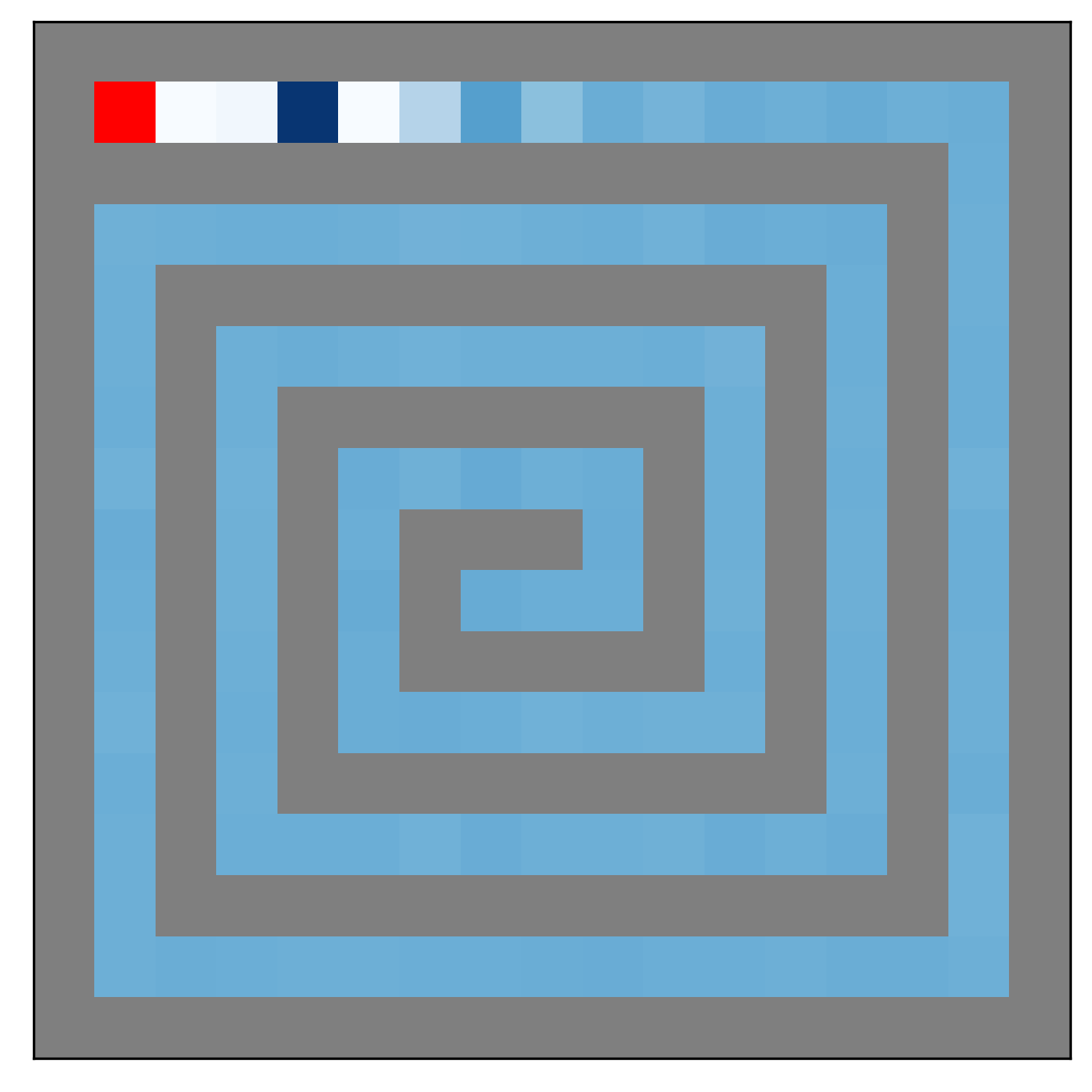}
            \label{fig:ec}
        }
        \hfill
        \subfigure[Ours]{
            \includegraphics[width=0.3\textwidth]{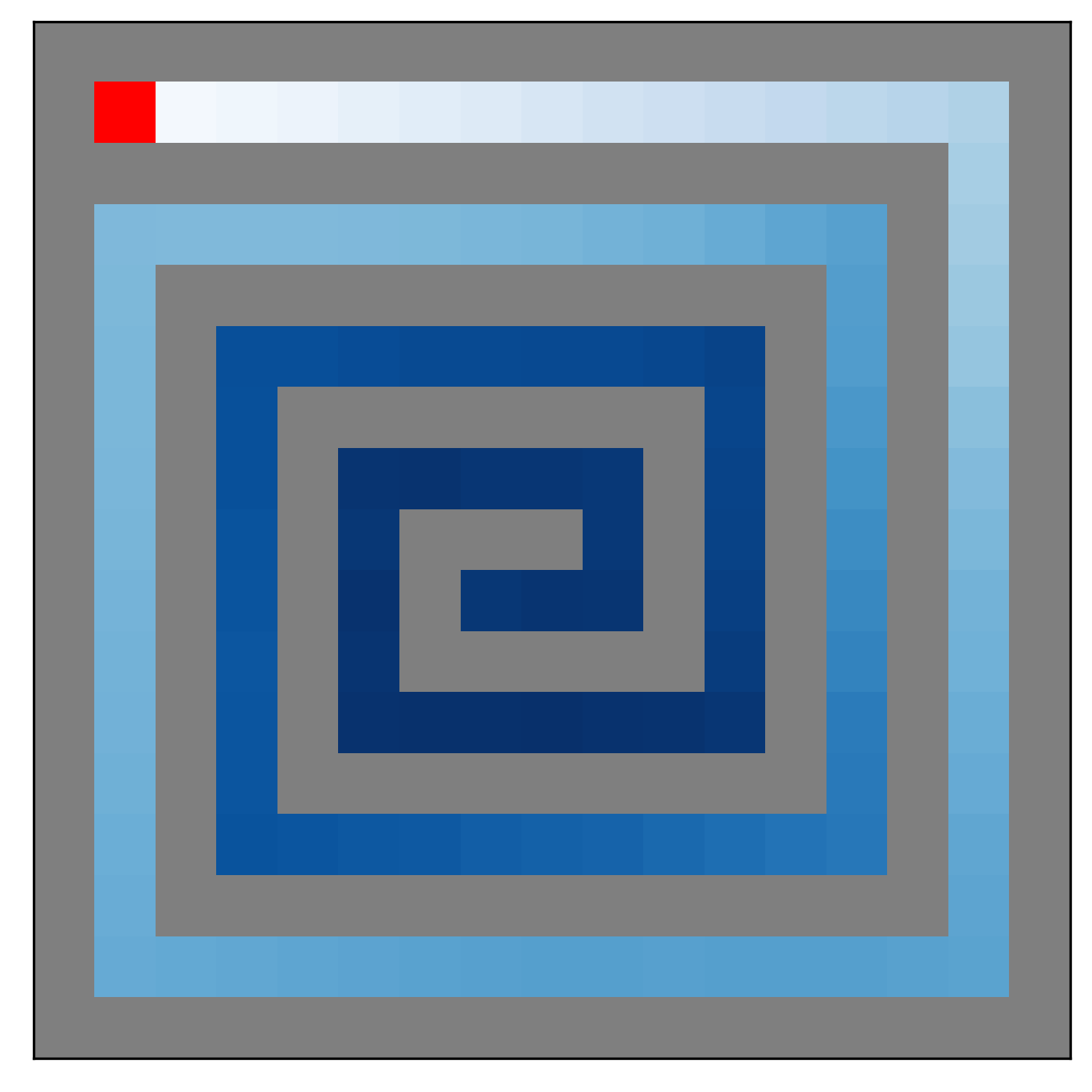}
            \label{fig:temporal_distance}
        }
        \vspace{-8pt}
        \caption{Distance from $\squaresymbol$ to all other states in a 17x17 SpiralMaze. Darker colors indicate greater distance. (\textbf{Left}) Euclidean distance of embeddings trained by inverse dynamics. (\textbf{Center}) Likelihood estimation of easy transitions (EC).
        (\textbf{Right}) The learned temporal distance (Ours).}
        \label{fig:spiral_maze}
    \end{minipage}
    \vspace{-12pt}
\end{figure}


Recent intrinsic motivation methods \citep{andres2022evaluation, henaff2022exploratione3b, henaff2023study} have demonstrated that incorporating an episodic bonus is crucial for solving sparse reward CMDP problems. For instance, high-performing methods like NovelD\citep{zhang2021noveld} often depend on an episodic count term to be effective in CMDPs. However, these count-based methods face challenges in large or noisy state spaces. When each state is unique, the episodic bonus becomes less meaningful as it assigns the same value to all states. This issue is evident in the "noisy-TV" problem~\citep{burda2018explorationrnd}, where random noise disrupts the state. We validated this observation using the MiniGird-DoorKey16x16 experiment, as shown in Figure~\ref{fig:limitation}. When no noise was present in the environment, NovelD performed well. However, when Gaussian noise with a mean of 0 and variance of 0.1 was added to the state input, NovelD failed completely, as indicated by the corresponding curve (NovelD-noise). In contrast, our proposed method, ETD, maintained strong performance even with the added noise, demonstrating its robustness where NovelD proved ineffective.

Potential alternatives include computing episodic novelty based on the similarity of the current state to previously encountered states stored in episodic memory. This can be achieved using metrics such as Euclidean distance in an embedding space learned through inverse dynamics, as seen in NGU \citep{badia2020never} and E3B \citep{henaff2022exploratione3b}. Another approach is to estimate the likelihood of easy transitions between states, as done in EC \citep{savinov2018ec} and DEIR \citep{wan2023deir}. However, we found that these methods do not accurately measure the similarity between states. We conducted an experiment using the SpiralMaze environment, as shown in the Figure~\ref{fig:spiral_maze}. We collected 100 trajectories, each consisting of 50 time steps, as training data. We then evaluated the similarity between the red square state $\squaresymbol$ and all other states. As the spiral deepens, the states become increasingly distant from the red square, indicating lower similarity. Figure~\ref{fig:inverse_dynamic} shows the inverse dynamics method, where embeddings are trained to predict actions based on the current and next states, with the Euclidean distance between embeddings serving as the similarity metric. However, we found that the Inverse Dynamics method struggles to provide a smooth estimation of the similarity between states. Figure~\ref{fig:ec} presents the EC approach, where a classifier predicts the likelihood that two states are within K time steps, using this likelihood as the similarity measure. We observed that this likelihood measure effectively captures local proximity but fails to distinguish the similarity between most states due to its nature as a non-distance metric. In contrast, our method, which learns temporal distances, as shown in Figure~\ref{fig:temporal_distance}, accurately captures the similarity between the red square and each state.

\section{Methods}
In this section, we introduce Episodic Novelty through Temporal Distance (ETD), an algorithm designed to enhance exploration in CMDPs. The core innovation of ETD is using a temporal distance quasimetric to measure state similarity, encouraging the agent to explore states that are temporally distant from its episodic memory. As summarized in Figure~\ref{fig:ETD}, we employ a specially parameterized contrastive learning method to learn this temporal distance and then identify the minimum temporal distance between the current state and all previously encountered states in the episodic memory. This minimum distance serves as the intrinsic reward for the current state. The next two sections will elaborate on the details.


\begin{figure}[t]
    \centering
    \includegraphics[width=0.96\linewidth]{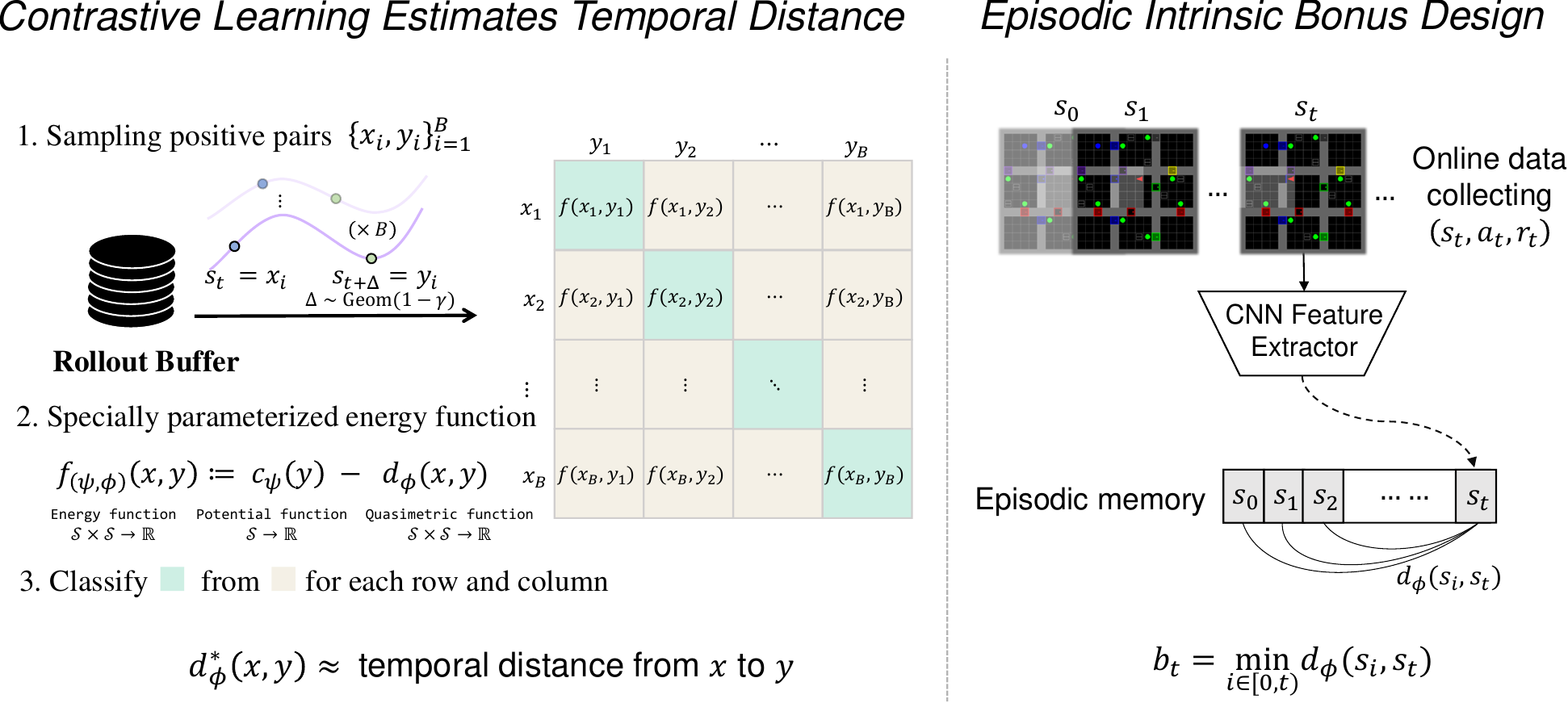}
    \vspace{-5pt}
    \caption{Overview of ETD. ETD encourages visits to temporally distant states from episodic memory. Temporal distance can be learned through contrastive learning when positive samples consist of a state and its geometrically distributed future state, with the erengy function parameterized as a potential network minus a quasimetric network. The intrinsic reward is then derived from the minimum temporal distance between the current state and the states stored in episodic memory.}
    \vspace{-3pt}
    \label{fig:ETD}
\end{figure}


\subsection{Temporal Distance Learning}


Temporal distance can be intuitively understood through the transition probability between states, where a lower probability indicates a larger distance. For a given policy $\pi$, we define $p^{\pi}(s_k=y|s_0=x)$ as the probability of reaching state $y$ at time step $k$ when starting from $x$. The transition probability can be described using a discounted state occupancy measure, which equals a geometrically weighted average of the  probabilities:
\begin{equation}
\label{eq:discount_measure}
    p_{\gamma}^\pi(s_f = y | s_0 = x) = (1-\gamma) \sum_{k=0}^{\infty} \gamma^k p^{\pi}(s_k=y|s_0=x).
\end{equation}
To ensure the temporal distance behaves as a quasimetric (a metric that relaxes the symmetry assumption), we use the successor distance~\citep{myers2024learning}. Given a policy $\pi$, the successor distance is defined as the difference between the logarithms of the probabilities of reaching  $y$ from $y$ (self-loop) and reaching $y$ from $x$:
\begin{equation}
\label{eq:temporal_distance}
    d^{\pi}_{\text{SD}}(x, y) = \log \left( \frac{p^{\pi}_\gamma(s_f = y | s_0 = y)}{p^{\pi}_\gamma(s_f = y | s_0 = x)}\right).
\end{equation}
This formulation satisfies the triangle inequality and other quasimetric properties~\citep{myers2024learning}, even in stochastic MDPs.

Consequently, the successor distance can be reliably used as a measure of similarity between states.




we present how to estimate the successor distance $d^{\pi}_{\text{SD}}(x, y)$ defined in Equation~\ref{eq:temporal_distance} based on the contrastive learning. Given the joint distribution $p_\gamma^\pi\left({s}_{f}=y \mid {s}_0=x\right)$ over two state $(x,y)$, define $p_s(x)$ as the marginal state distribution, and $p_{s_f}(y)=\int_{s} p_s(x) p_\gamma^\pi\left({s}_{f}=y \mid {s}_0=x\right)$ as the corresponding marginal distribution over future states.
Intuitively, we can define an energy function $f_{\theta}(x,y)$ that assigns larger values to $(x, y)$ tuples sampled from the joint distribution and smaller values to $(x, y)$ tuples sampled independently from their marginal distributions.
Give a batch of tuples $\{x_i, y_i\}_{i=1}^{B}$, we use the symmetrized infoNCE loss function~\citep{oord2018cpc} to learn $f_{\theta}(x,y)$:
\begin{equation}
\label{eq:infonce}
    \mathcal{L}_{\theta} = \sum_{i=1}^{B} \left[ \log \frac{\exp(f_\theta(x_i,y_i))}{\sum_{j=1}^B \exp(f_{\theta}(x_i, y_j))} +  \log \frac{\exp(f_\theta(x_i,y_i))}{\sum_{j=1}^B \exp(f_{\theta}(x_j, y_i))} \right].
\end{equation}

Based on Equation~\ref{eq:infonce}, we can use the unique solution of the energy function $f_{\theta^*}$~\citep{ma2018noise,poole2019variational} to recover the successor distance $d^{\pi}_{\text{SD}}(x, y)$ in Equation~\ref{eq:temporal_distance}.
Following prior work~\citep{myers2024learning}, we decompose the energy function $f_{\theta=(\phi,\psi)}(x,y)$ as the difference between a potential network $c_{\psi}(y):\mathcal{S} \rightarrow \mathbb{R}$ and a quasimetric network $d_{\phi}(x,y):\mathcal{S} \times \mathcal{S} \rightarrow \mathbb{R}$:
\begin{equation}
    \label{parameter}
    f_{\theta=(\phi,\psi)}(x,y) = c_{\psi}(y)- d_{\phi}(x,y).
\end{equation}

Then, we have $d^{\pi}_{\text{SD}}(x, y) = d_{\phi^*}(x,y)$.
As a result, we discard $c_{\psi(y)}$ after contrastive learning and directly use $d_{\phi}(x,y)$ as our temporal distance.
For further details, see Appendix~\ref{sec:SD}.


To demonstrate the learned temporal distance, we present results from the SpiralMaze 17x17 task, as shown in Figure~\ref{fig:temporal_distance}. We collected 100 random episodes (each with 50-time steps) and minimized the loss function following the process above. The resulting temporal distance $d_{\phi}(\squaresymbol, \cdot)$ is visualized with a colormap (darker color indicates larger distances), showing strong alignment with ground-truth.


\subsection{Temporal Distance as Episodic Bonus}
Our approach maximizes the temporal distance between newly visited and previously encountered states within the current episode. At each time step $t$, we assign a larger intrinsic reward to states that are temporally distant from the episodic memory. Formally, the episodic temporal distance bonus is defined as:
\begin{equation}
    b_{\text{ETD}}(s_t) = \min_{k \in [0,t)} d_{\phi}(s_k, s_t),
\end{equation}
where $\{d_{\phi}(s_k, s_t)\}_{k=0}^{t-1}$ represents the learned temporal distances between the current state $s_t$ and all previous states $\{s_k\}_{k=0}^{t-1}$ in the episodic memory.
The minimum distance is used as the episodic intrinsic reward. In terms of computational efficiency, storing CNN-extracted embeddings in episodic memory minimizes memory overhead. Additionally, concatenating memory states allow all temporal distances to be computed in a single neural network inference, ensuring high time efficiency.

\paragraph{Connections to previous intrinsic motivation methods.} Many previous episodic intrinsic reward methods, such as DEIR~\citep{wan2023deir}, NGU~\citep{badia2020never}, GoBI~\citep{fu2023gobi}, and EC~\citep{savinov2018ec}, also rely on episodic memory and past states to calculate rewards. Compared to these methods, our reward formulation is notably simpler. Both EC and GoBI use reachability to assess state similarity, which is similar to our approach. However, EC struggles to learn temporal distance accurately, as shown in Figure~\ref{fig:ec}. Meanwhile, GoBI depends on a world model's lookahead rollout to estimate temporal distance, which results in high computational complexity.

\subsection{Full ETD Algorithms}\label{sec:full_alg}
We implement our quasimetric network $d_{\phi}$ using MRN~\citep{liu2023mrn}, which allows us to generate an asymmetric distance. Detailed structural information about the network is provided in the Appendix~\ref{appendix:mrn}. The potential network $c_\psi$ is a multi-layer perceptron (MLP) that shares the same CNN backbone as the MRN. The complete algorithm is outlined in Algorithm~\ref{alg:etd}.
  
\begin{algorithm}[t]
\caption{Episodic Novelty through Temporal Distance}
\label{alg:etd}
\begin{algorithmic} 
\STATE Initialize policy $\pi$, quasimetric $d_\phi$, potential $c_\psi$ and $f_{(\phi, \psi)} = c_{\psi} - d_{\phi}$.
\WHILE{not converged}
\STATE Sample context $c \sim \mu_C$ and initial state $s_0 \sim \mu_S(\cdot | c)$
\FOR{$t = 0, ..., T$}
\STATE $a_t \sim \pi(\cdot | s_t)$   \textcolor{gray}{\quad \quad \quad \quad \quad \ \# Sample action}
\STATE $s_{t+1}, r^e_{t+1} \sim P(\cdot | s_t, a_t, c)$  \textcolor{gray}{\quad \# Step through environment}
\STATE $b_{t+1} = \minlimits{k\in [0,t+1)} d_{\phi}(s_k, s_{t+1}) \vphantom{\bigg|}$ \textcolor{gray}{\ \# Compute bonus}
\STATE $r_{t+1} = r_{t+1}^e + \beta b_{t+1}$
\ENDFOR
\STATE Sample pair of states $\{(x_i, y_i)\}_{i=1}^{B} \sim p^\pi_\gamma(s^f=y_i|s_0=x_i)p_s(x_i)$ 
\STATE \textcolor{gray}{\# Practically, $x_i=s_t, y_i=s_{t+j}$, $j \sim \text{Geom}(1-\gamma)$.}
\STATE Update $f_{(\phi, \psi)}$ to minimize the loss: 
\begin{equation*}
\begin{aligned}
\mathcal{L}_{({\phi, \psi})} = \sum_{i=1}^{B} \left[ \log \frac{\exp(f_{(\phi, \psi)}(x_i,y_i))}{\sum_{j=1}^B \exp(f_{(\phi, \psi)}(x_i, y_j))} +  \log \frac{\exp(f_{(\phi, \psi)}(x_i,y_i))}{\sum_{j=1}^B \exp(f_{(\phi, \psi)}(x_j, y_i))} \right]
\end{aligned}
\end{equation*}
\STATE Perform PPO update on $\pi$ using rewards $r_1, ..., r_T$.
\ENDWHILE
\end{algorithmic}
\end{algorithm}


\section{Experiments}

To evaluate the capabilities of existing methods and assess ETD, we aim to identify CMDP environments that present challenges typical of realistic scenarios, such as sparse rewards, noisy or irrelevant features, and large state spaces. We consider three domains, including the Minigrid~\citep{MinigridMiniworld23} and its noisy variants, as well as high-dimensional pixel-based Crafter~\citep{hafner2022crafter} and Miniworld~\citep{MinigridMiniworld23}. For all the experiments, we use PPO as the base RL algorithm and add intrinsic rewards specified by various methods to encourage exploration.


\begin{figure}[h]
    \centering
    \subfigure[MiniGrid]{
        \includegraphics[width=0.22\textwidth]{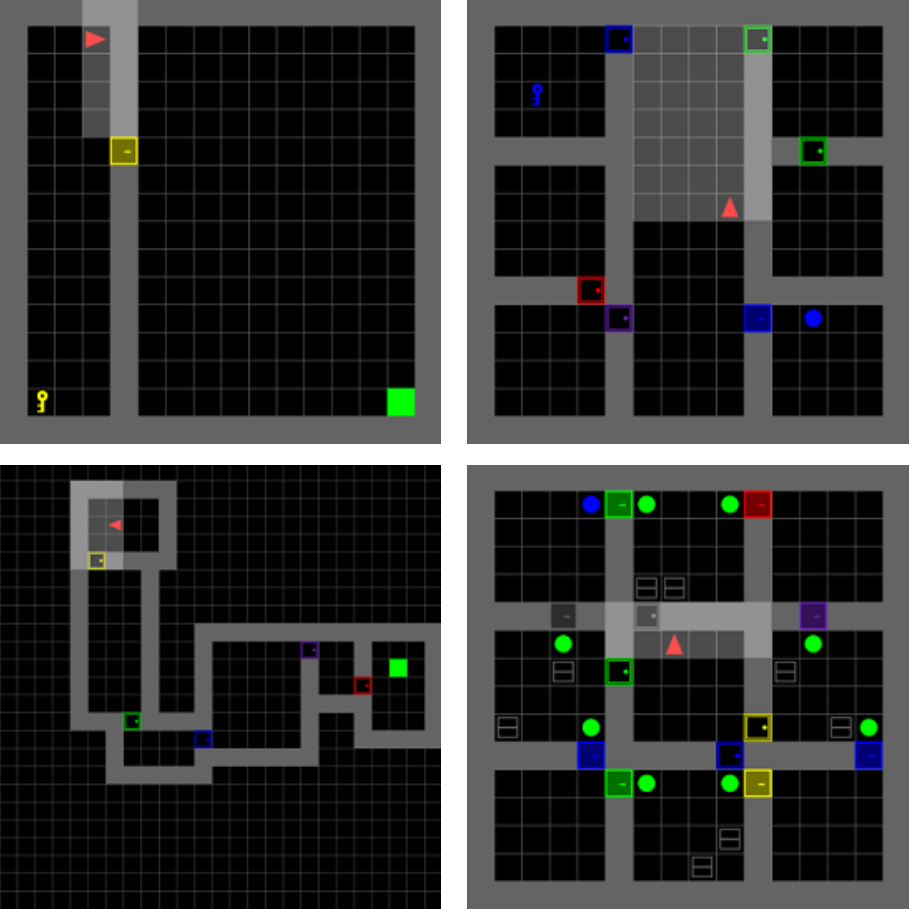}
        \label{fig:minigrid_render}
    }
    \subfigure[Crafter]{
        \includegraphics[width=0.22\textwidth]{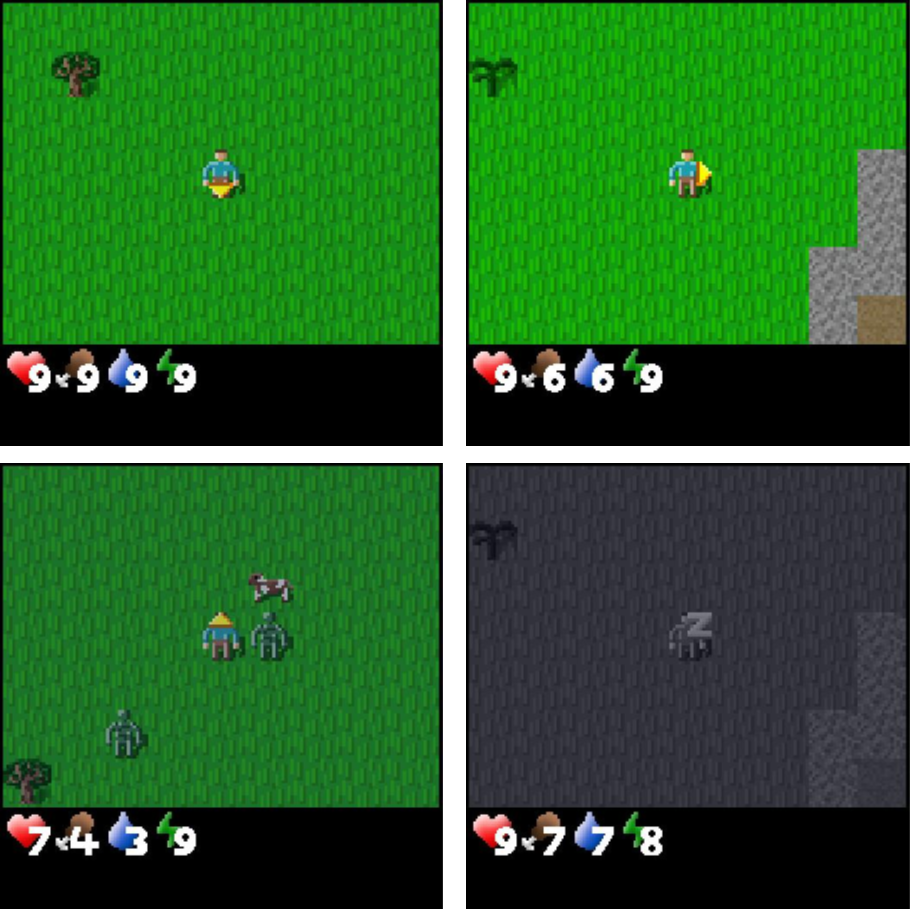}
        \label{fig:crafter_render}
    }
    \subfigure[MiniWorld]{
        \includegraphics[width=0.22\textwidth]{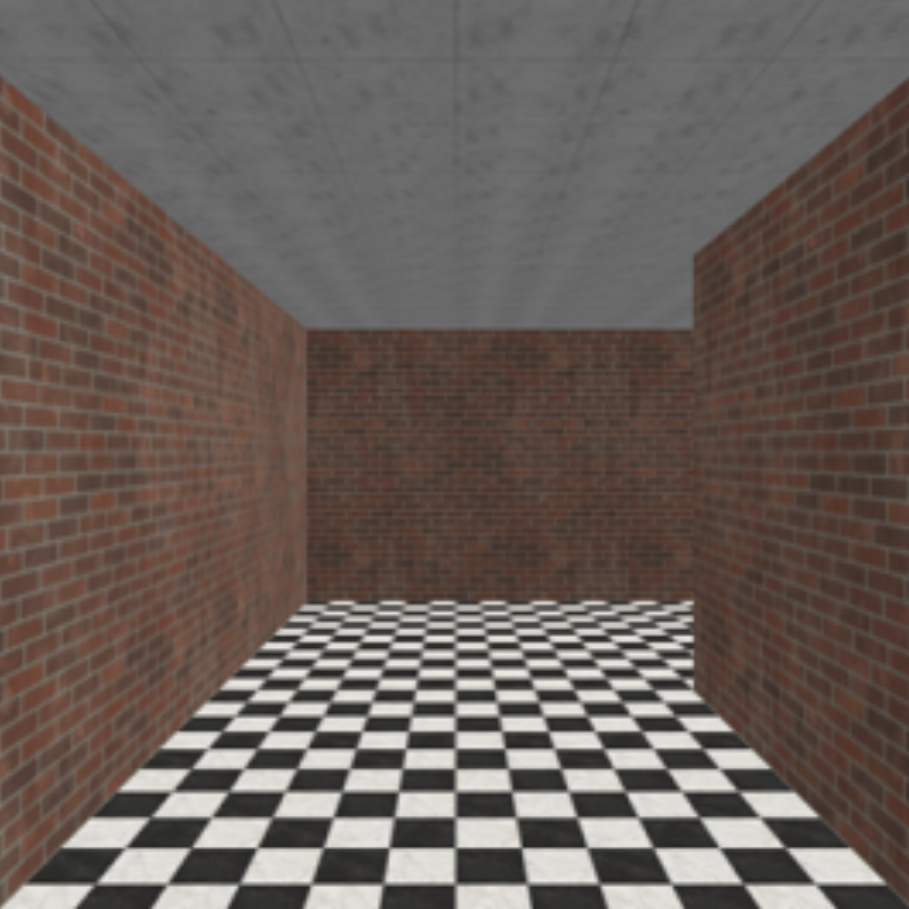}
        \label{fig:miniworld_render}
    }
    \subfigure[MiniWorld Top ]{
        \includegraphics[width=0.22\textwidth]{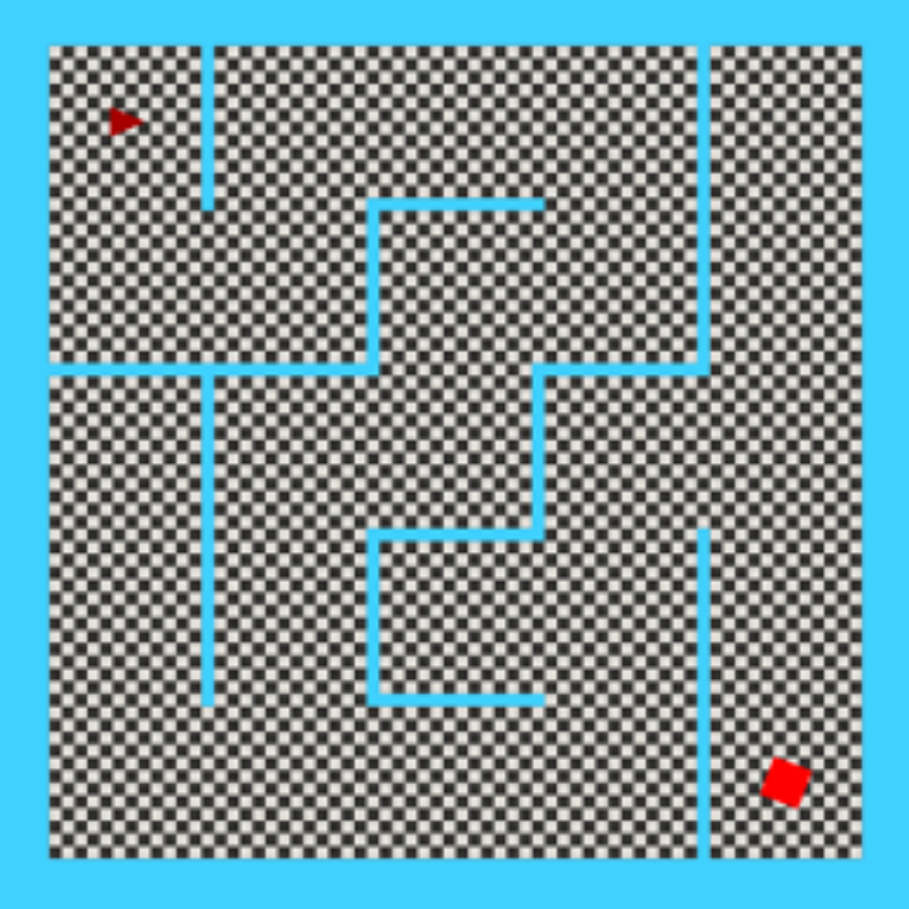}
        \label{fig:miniworld_top_render}
    }
    \caption{Rendering of the environments used in this work.}
    \label{fig:env_render}
\end{figure}

We compare ETD against 7 previous methods, including DEIR~\citep{wan2023deir}, NovelD~\citep{zhang2021noveld}, E3B~\citep{henaff2022exploratione3b}, EC~\citep{savinov2018ec}, NGU~\citep{badia2020never}, RND~\citep{burda2018explorationrnd}.  We also included a pure episodic count-based method named Count, which recent studies~\citep{henaff2023study} have shown to be a strong baseline in CMDPs. We carefully tuned the hyperparameters of each method through grid search to ensure optimal performance.

\subsection{MiniGrid Environments}
MiniGrid~\citep{MinigridMiniworld23} features procedurally generated 2D environments tailored for challenging exploration tasks. In these environments, agents interact with objects such as keys, balls, doors, and boxes while navigating multiple rooms to locate a randomly placed goal. The agents receive a single sparse reward upon completing each episode. We chose four particularly challenging environments: MultiRoom, DoorKey, KeyCorridor, and ObstructedMaze. In the MultiRoom environment, the agent's task is relatively straightforward, requiring navigating through a series of interconnected rooms to reach the goal. DoorKey presents an increased difficulty, as the agent must first find and pick up a key and then open a door before reaching the goal. KeyCorridor is even more demanding, requiring the agent to open multiple doors, locate a key, and then use it to unlock another door to access the goal. ObstructedMaze is the most complex of all: the key is hidden within a box, a ball obstructs the door, and the agent must find the hidden key, move the ball, open the door, and finally reach the goal. Further details on these tasks can be found in the Appendix.

\begin{figure}[t!]
    \centering
    \vspace{-5pt}
    \includegraphics[width=1.0\linewidth]{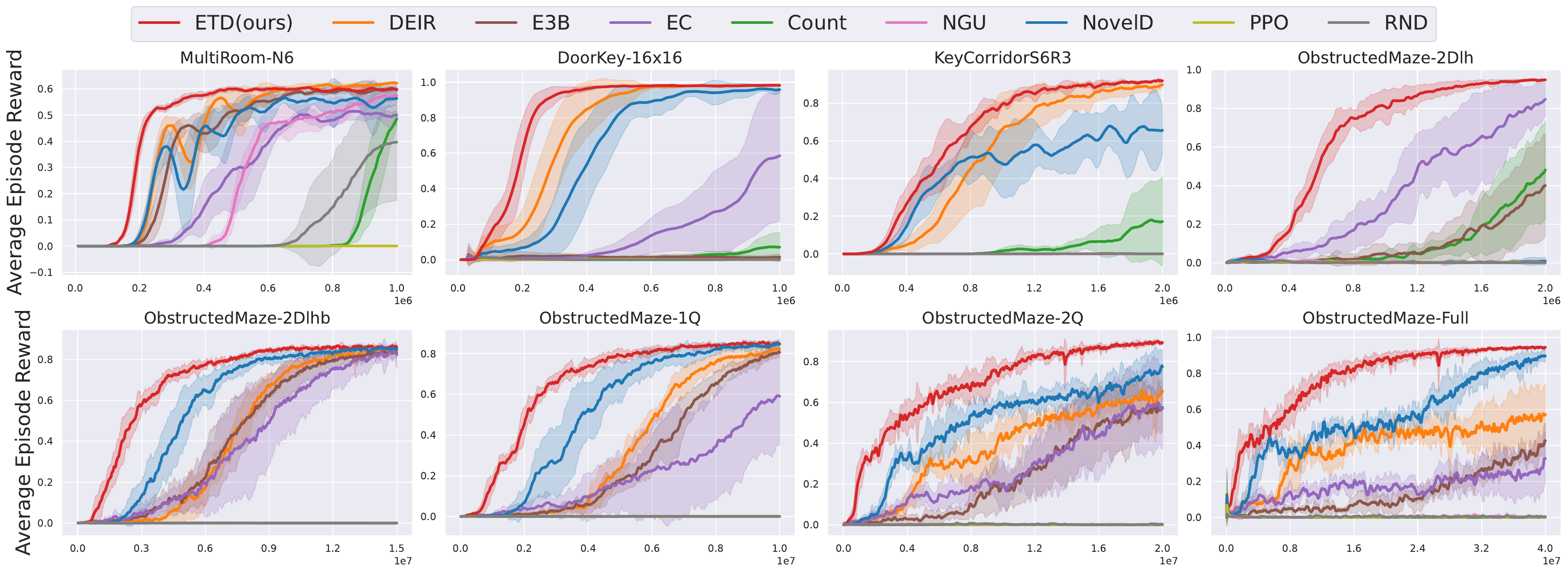}
    \vspace{-10pt}
    \caption{Training performance of ETD and the baselines on 8 most challenging Minigrid environments. The x-axis represents the environment steps. All the results are averaged across 5 seeds.}
    \label{fig:minigrid_benchmark9}
\end{figure}



Figure~\ref{fig:minigrid_benchmark9} illustrates the learning curves of ETD and other state-of-the-art exploration baselines across eight challenging MiniGrid navigation tasks. Notably, MultiRoom-N6, DoorKey-16x16, and KeyCorridor-S6R3 are the most challenging settings for their respective environments, while ObstructedMaze offers multiple difficulty levels, from 2Dlh to the most challenging Full version. ETD consistently outperforms all other methods, especially in the ObstructedMaze-Full environment, where it reaches near-optimal performance within 20 million steps, demonstrating double the sample efficiency compared to NovelD, the strongest baseline. Our implementation of NovelD achieved the highest reported performance in the literature, highlighting the careful tuning of our baseline. 

From Figure~\ref{fig:minigrid_benchmark9}, we can observe that PPO fails to learn effectively without intrinsic reward. Additionally, the global intrinsic reward method RND is almost ineffective, which might be due to the ineffectiveness of global exploration experience in CMDP. We also notice that NGU with episodic intrinsic reward performs poorly, possibly because NGU is primarily designed for Atari tasks, with its episodic intrinsic reward tailored for that domain. NovelD, on the other hand, performs quite well, largely due to its episodic count mechanism, and the global bonus in the current NovelD also contributes to its performance. Meanwhile, Count, which only uses the episodic count of NovelD as the reward, achieves relatively good results on certain maps, such as Obstructed-2Dlh, but it still significantly lags behind NovelD. Pure episodic intrinsic reward methods like DEIR, E3B, and EC all perform relatively well, but their similarity measurements lack precision. In contrast, our ETD method, which leverages temporal distance, significantly improves exploration learning efficiency. We did not compare it with GoBI because it requires pretraining a world model, and the rollout costs of the world model are pretty high. Moreover, we observed that the convergence rate of our learning curves is already significantly faster than what was demonstrated in the GoBI paper.

\subsection{MiniGrid Environments with noise}
\begin{figure}[t!]
    \centering
    \includegraphics[width=1.0\linewidth]{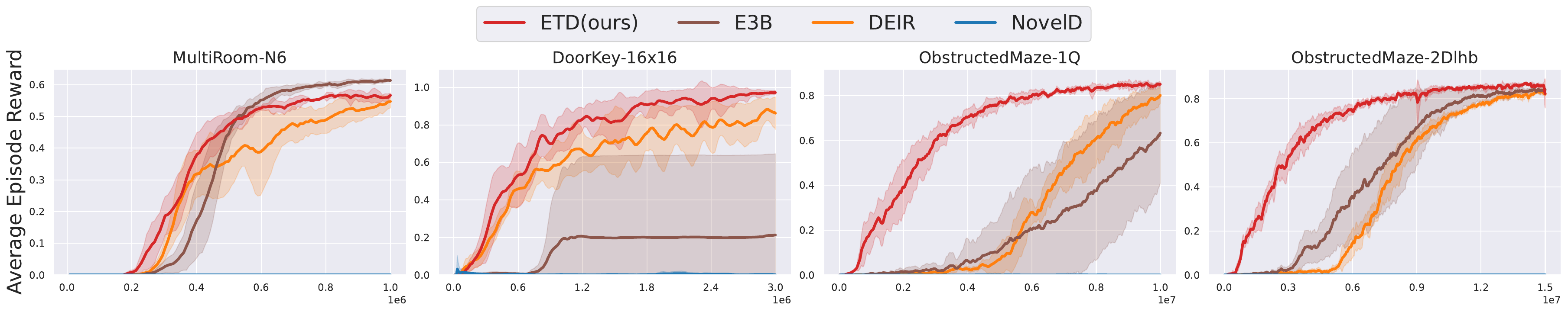}
    \vspace{-20pt}
    \caption{Training performance on Minigrid with noise environments. The x-axis represents the environment steps. All results are averaged across 5 seeds.}
    \label{fig:minigrid_noisebenchmark4}
\end{figure}
To better simulate realistic scenarios, we introduced noise into the states of MiniGrid, resulting in stochastic dynamics and ensuring that no two states are identical. The noise is generated as Gaussian noise with a mean of 0 and a variance of 0.1, which is then directly added to the states. We compared the ETD method with three effective methods for MiniGrid: DEIR, NovelD, and E3B. The results are presented in Figure \ref{fig:minigrid_noisebenchmark4}.

Our results indicate that NovelD, a count-based method, completely failed to effectively guide exploration, as the episodic rewards based on counts no longer provided useful information. In contrast, similarity-based methods such as E3B and DEIR continued to perform reasonably well. However, our approach provided a more accurate assessment of state similarity by utilizing temporal distance. Even in the presence of noise, temporal distance effectively represented the similarity between two states, while the inverse dynamics representation learning used in E3B and the discriminative representation learning used in DEIR could not perfectly measure the distance between states, allowing our method to outperform both E3B and DEIR.

\subsection{Ablations of ETD}
\begin{wrapfigure}{r}{0.45\textwidth}
    \centering
    \vspace{-5pt}
    \subfigure{
        \includegraphics[width=\linewidth]{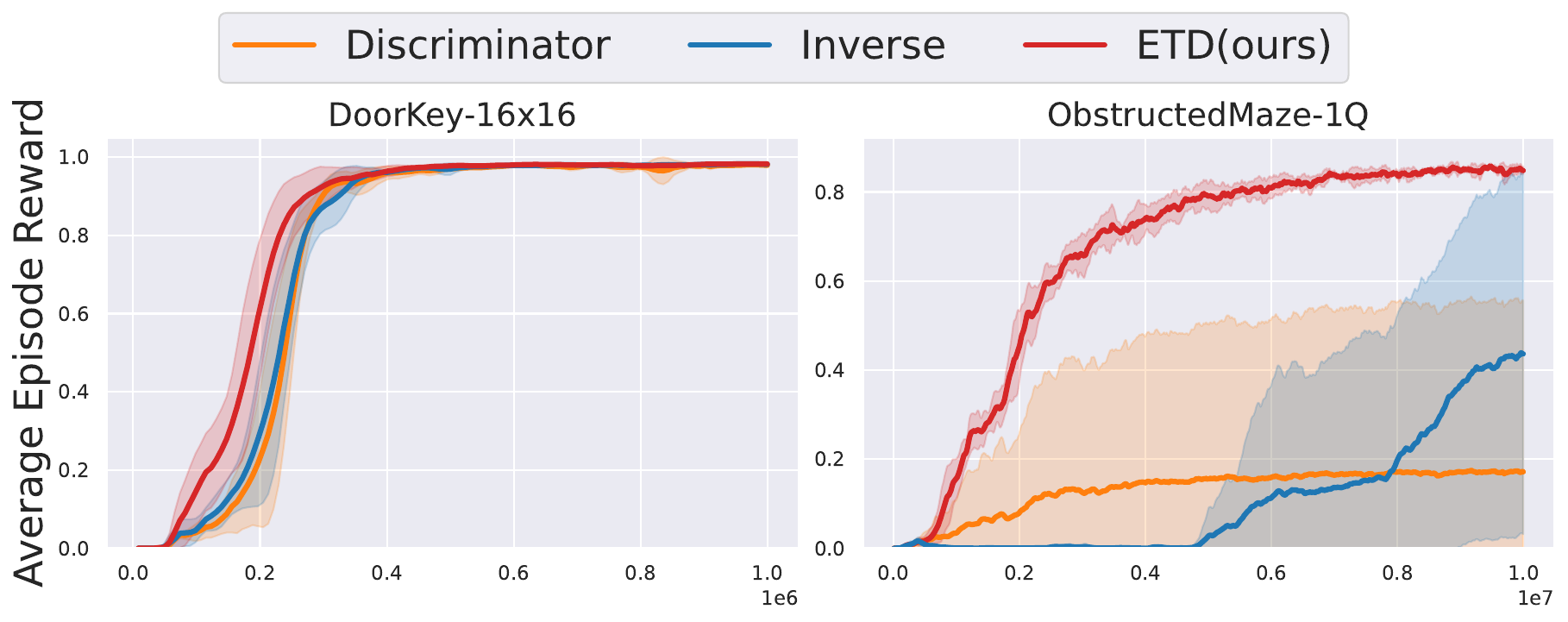}
    }
    \vspace{-15pt}
    \caption{Ablation of representation learning.}
    \vspace{-5pt}
    \label{fig:minigrid_ablation_repr}
\end{wrapfigure}
\paragraph{Representation Learning} To further illustrate the effectiveness of temporal distance as an intrinsic reward, we compare the ETD with the Euclidean distance within both inverse dynamics and discriminator representation learning contexts. Discriminator representation learning, introduced in DEIR, resembles contrastive learning and predicts whether two states and an action are part of a truly observed transition.
While all these techniques utilize ETD as a form of intrinsic reward, they differ in evaluating similarities between states. The results of comparisons are illustrated in Figure~\ref{fig:minigrid_ablation_repr}. In the Doorkey-16x16 task, the performance difference is not significant. However, in the ObstructedMaze-1Q task, where the state is considerably richer, ETD outperforms both the inverse dynamic and discriminator methods. This finding indicates that a more accurate distance measurement contributes significantly to exploration efficiency.

\begin{figure}[t]
    \centering
    \begin{minipage}{0.48\textwidth} 
        \centering
        \includegraphics[width=\linewidth]{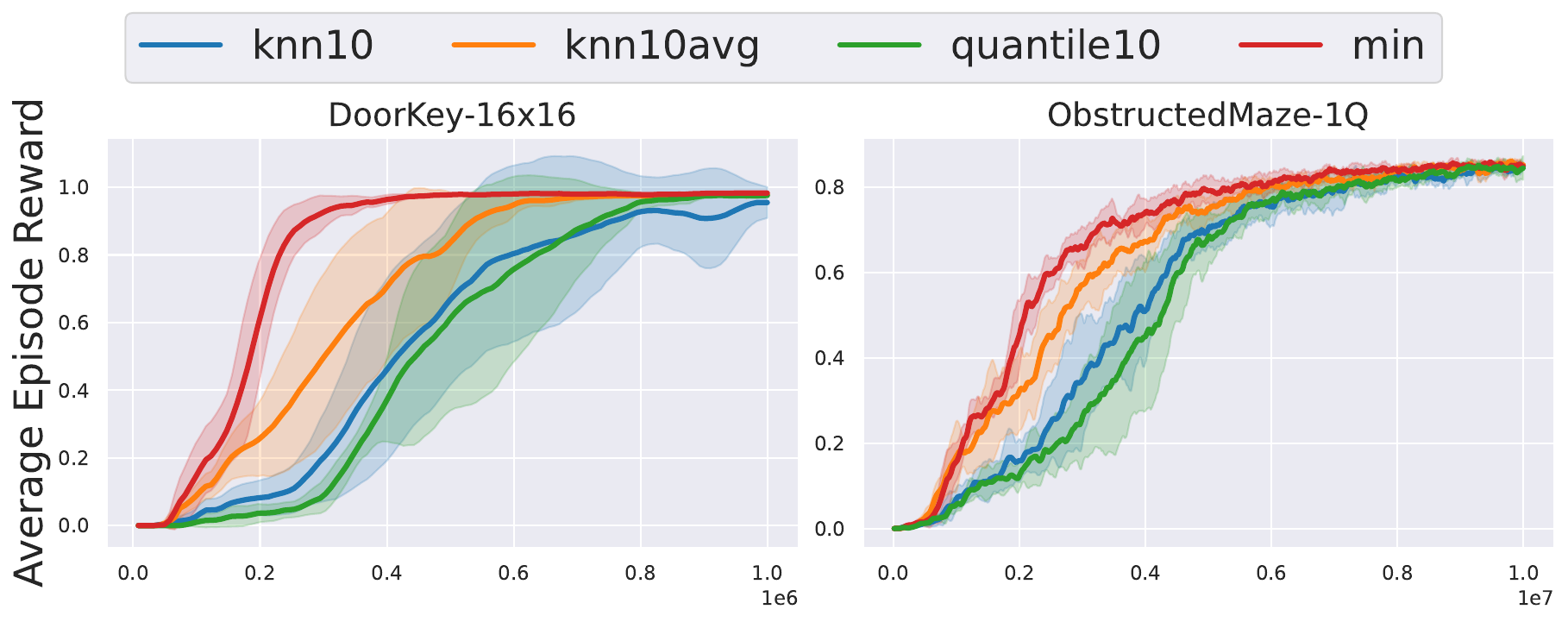}
        \vspace{-20pt}
        \caption{Ablation of aggregate function}
        \label{fig:minigrid_ablation_agg}
    \end{minipage}
    \hfill
    \begin{minipage}{0.48\textwidth}
        \centering
        \includegraphics[width=\linewidth]{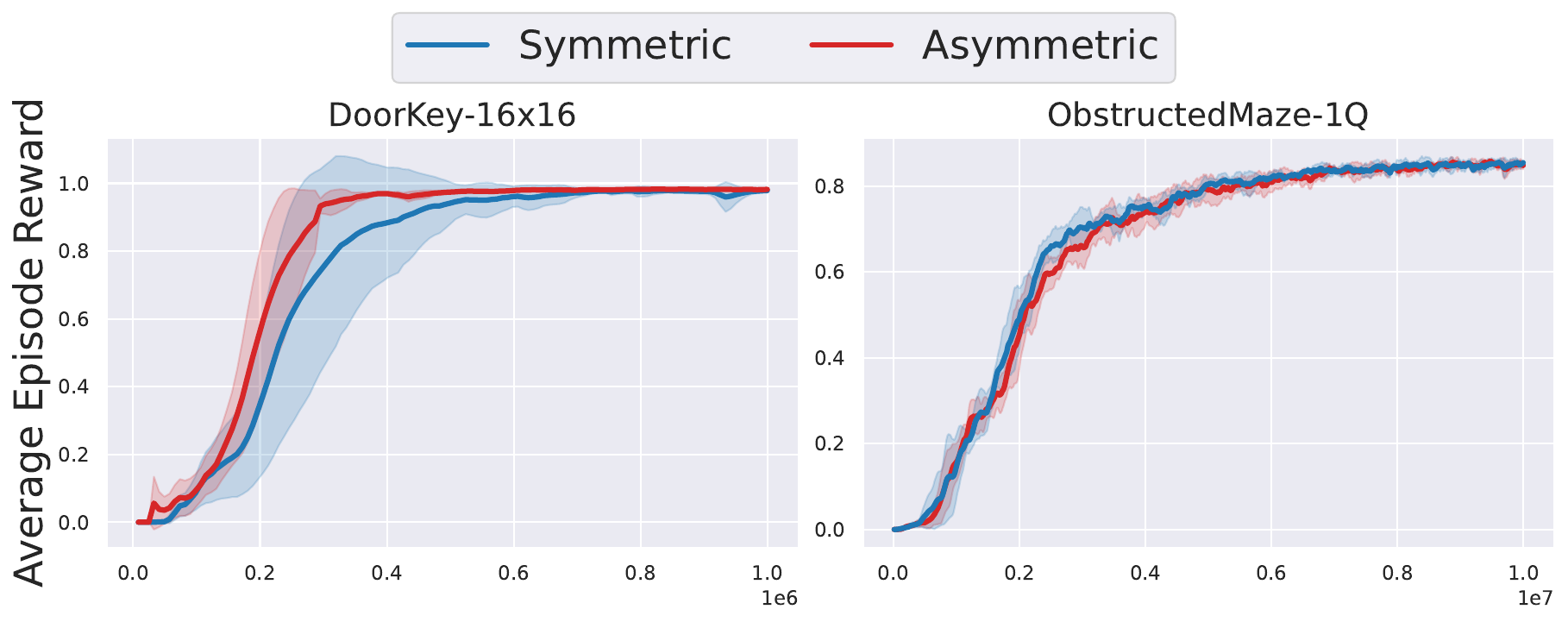} %
        \vspace{-20pt}
        \caption{Ablation of Asymmetric / Symmetric} 
        \label{fig:minigrid_ablation_metric}
    \end{minipage}
\end{figure}

\paragraph{Aggregate function} For the intrinsic reward formulation, we consider not only the minimum but also other functions, such as the 10\% quantile (quantile10), the 10th nearest neighbor (knn10), and the average of the 1st to 10th nearest neighbors (knn10avg). The comparisons are presented in Figure~\ref{fig:minigrid_ablation_agg}. We observe that the minimum consistently outperforms the other functions. This is because the minimum provides the most aggressive reward signal. For example, if a state in the episodic memory matches the current state, the minimum yields a reward signal of zero. This aggressive reward discourages the agent from revisiting similar states, thus enhancing exploration efficiency.
\vspace{-5pt}
\paragraph{Symmetric}
Our ETD method uses a quasimetric distance function, which is inherently asymmetric. However, symmetric alternatives can also be considered. For instance, by removing the asymmetric components from the MRN, we can obtain a symmetric distance function. The comparison results are shown in Figure~\ref{fig:minigrid_ablation_metric}. Interestingly, the performances of both the asymmetric and symmetric versions are nearly identical. Given that most environment transitions exhibit more symmetry than asymmetry, employing a symmetric distance function is reasonable. Nevertheless, to retain the generality of our approach, we choose an asymmetric distance function as the default.

\subsection{Pixel-Based Crafter and MiniWorld Maze}



To evaluate the scalability of our method to continuous high-dimensional pixel-based observations, we conducted experiments on two pixel-based CMDP benchmarks: Crafter and MiniWorld.

Crafter~\citep{hafner2022crafter} is a 2D environment with randomly generated worlds and pixel-based observations (64x64x3), where players complete tasks such as foraging for food and water, building shelters and tools, and defending against monsters to unlock 22 achievements. The reward system is sparse, granting +1 for each unique achievement unlocked per episode and a -0.1/+0.1 reward based on life points. With a budget of 1 million environmental steps, Crafter suggests evaluating performance using both the success rate of 22 achievements and a geometric mean score, which we adopt as our performance metric. Additionally, we conducted experiments without life rewards, as they often hindered learning efficiency.

MiniWorld~\citep{MinigridMiniworld23} is a procedurally generated 3D environment simulator that offers a first-person, partially observable view as the primary form of observation. We focused on the MiniWorld-Maze, where the agent must navigate through a procedurally generated maze. Exploration in this environment is particularly challenging due to the 3D first-person perspective and the limited field of view. Additionally, no reward is given if the agent fails to reach the goal within the time limit, further increasing the difficulty.

We compared ETD against DEIR, NovelD, and PPO without intrinsic rewards. As illustrated in Figure~\ref{fig:miniworld_benchmark3} and Figure~\ref{fig:crafter_score}, ETD consistently outperformed or matched the baseline algorithms, demonstrating its superior ability to address CMDP challenges with high-dimensional pixel-based observations.


\begin{figure}[t]
    \centering
    \includegraphics[width=0.95\textwidth]{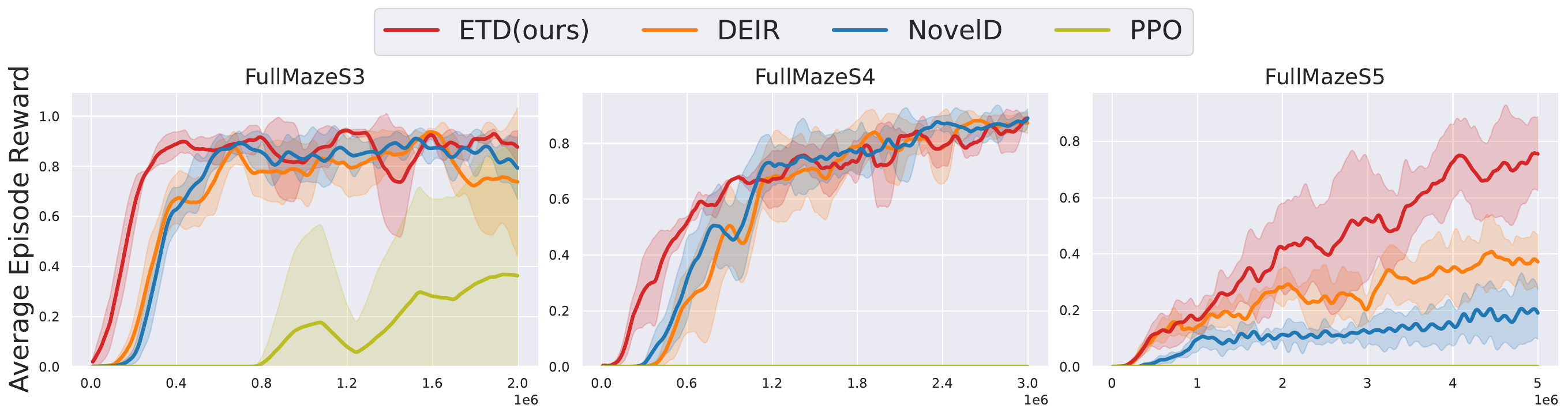}
    \vspace{-10pt}
    \caption{Training performance of ETD and the baselines on MiniWorld Maze with different sizes.}
    \label{fig:miniworld_benchmark3}
\end{figure}

\begin{figure}[t]
    \centering
    \vspace{-5pt}
    \subfigure[Crafter]{
        \includegraphics[width=0.35\textwidth]{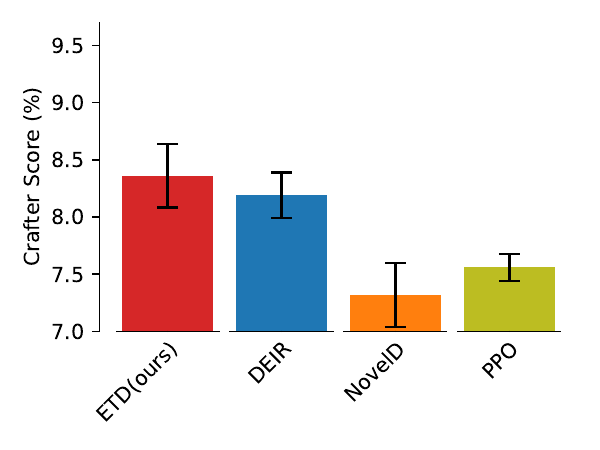}
    }
    \subfigure[Crafter w.o life reward]{
        \includegraphics[width=0.35\textwidth]{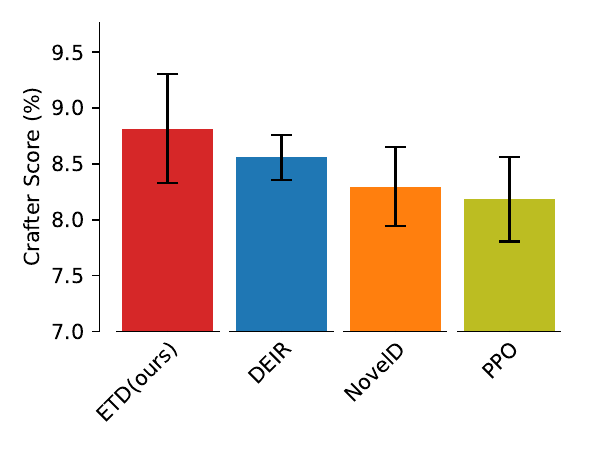}
    }
    \caption{Evaluating ETD and the baselines on Crafter.}
    \vspace{-5pt}
    \label{fig:crafter_score}
\end{figure}


\section{Related Work}

\paragraph{Intrinsic Motivation in RL}
Exploration driven by intrinsic motivation has long been a key focus in the RL community~\citep{oudeyer2007intrinsic, oudeyer2007whatisintrinsic}. Various methods that combine deep RL agents with exploration bonuses have been developed. Notable examples include ICM~\citep{pathak2017icm}, RND~\citep{burda2018explorationrnd}, and pseudocounts~\citep{bellemare2016unifying,martin2017count,ostrovski2017count,machado2020count}, which have demonstrated success in challenging tasks like Montezuma’s Revenge~\citep{Bellemare_2013}. These methods, often categorized as global bonus approaches, were primarily designed for singleton MDPs, presenting limitations in CMDPs, where environments vary across episodes~\citep{savva2019habitatplatformembodiedai, li2021igibson, gan2020threedworld, xiang2020sapien, zisselman2023explore, jiang2024rpbt}.

To address this limitation, recent works have proposed episodic bonuses~\citep{henaff2023study} relying on episodic memory~\citep{blundell2016model,pritzel2017neural}, where intrinsic rewards are derived from experiences within the current episode. These methods can be roughly grouped into two categories: count-based~\citep{raileanu2020ride,flet2021agac,zha2021rapid,zhang2021noveld,parisi2021interesting,zhang2021made,mu2022improving,ramesh2022exploring} and similarity-based~\citep{savinov2018ec,badia2020never,henaff2022exploratione3b,wan2023deir}. Combining global and episodic bonuses and effectively utilizing both remains an open challenge. Approaches like AGAC~\citep{flet2021agac}, RIDE~\citep{raileanu2020ride}, and NovelD~\citep{zhang2021noveld} utilize both, yielding better performance in CMDPs~\citep{parisi2021interesting,zhang2021made,mu2022improving,ramesh2022exploring}. However, other methods, such as EC~\citep{savinov2018ec} and E3B~\citep{henaff2022exploratione3b}, focus solely on episodic bonuses and have also succeeded in CMDPs. Our approach belongs to the latter category, leveraging episodic bonuses to enhance performance in CMDPs. Table~\ref{tab:your_label} compares recent intrinsic motivation approaches and highlights our method.

Our approach, which employs temporal distance as an intrinsic reward, shares similar ideas with EC~\citep{savinov2018ec} and GoBI~\citep{fu2023gobi}. EC also utilizes contrastive learning to assess the temporal proximity of states. However, while EC only predicts the probability that two states are temporally close, our method defines temporal distance as a theoretically quasimetric measure. GoBI uses a learned world model and extensive random rollouts to simulate reachable states, rewarding uniqueness. However, GoBI requires world model pretraining and incurs substantial computational costs. In contrast, our method achieves comparable performance while maintaining lower computational overhead.

\begin{table}[h]
    \centering
    \renewcommand{\arraystretch}{1.75}
    \begin{tabular}{lm{7cm}l}
        \toprule
        Method & Intrinsic Bonus: $ b_\text{Method}(s_t) $ & Episodic Bonus Category \\ 
        \midrule
        ICM    & $ ||\hat{\phi}(s_t) - \phi(s_t) ||_2^2$ & / \\ 
        
        RND    & $ ||f(s_t) - \bar{f}(s_t) ||_2^2$ & / \\ 
        
        AGAC & $D_{\mathrm{KL}}\left(\pi\left(\cdot \mid s_t\right) \| \pi_{\mathrm{adv}}\left(\cdot \mid s_t\right)\right)+\beta \cdot \textcolor{blue}{\frac{1}{\sqrt{N_e\left(s_{t+1}\right)}}} $ & Count \\
        
        RIDE   & $\left\|\phi\left(s_{t+1}\right)-\phi\left(s_t\right)\right\|_2 \cdot \textcolor{blue}{\frac{1}{\sqrt{N_e\left(s_t\right)}}}$ & Count \\ 
        
        NovelD& $[b_{\text{RND}}(s_{t+1}) - b_{\text{RND}}(s_t)]_{+} \cdot \textcolor{blue}{\mathbb{I}[N_e(s_t)=1]}$ & Count \\ 
        
        NGU   & $b_\text{RND}(s_t)  \cdot \textcolor{blue}{\frac{1}{\left( \sqrt{\sum_{\phi_i \in N_k} K(\phi(s_t), \phi_i)} + c\right)}}$ & Similarity \\ 
        
        E3B  & $\textcolor{blue}{\phi\left(s_t\right)^{\top}\left[\sum_{i=0}^{t-1} \phi\left(s_i\right) \phi\left(s_i\right)^{\top}+\lambda I\right]^{-1} \phi\left(s_t\right)}$ & Similarity \\ 
        
        EC & $\textcolor{blue}{\alpha(\beta - F \{C(s_i,s_t)\}_{i\in|M|})}$ & Similarity \\
        
        DEIR & $\textcolor{blue}{\min_{i\in|M|} \left\{  \frac{||\phi(s_i), \phi(s_t)||^2} {||\phi_{\mathrm{rnn}}(s_i), \phi_{\mathrm{rnn}}(s_t)||} \right\}}$ & Similarity \\
        
        \textbf{ETD(ours)} & $\textcolor{blue} {\min_{i\in|M|} d_{\text{SD}}(s_i, s_t)}$ & Similarity \\
        \bottomrule
    \end{tabular}
    \renewcommand{\arraystretch}{1}
    \caption{Summary of recent intrinsic motivation methods. We marked the episodic bonus as \textcolor{blue}{Blue}.}
    \label{tab:your_label}
\vspace{-0.5cm}
\end{table}

\paragraph{Temporal Distance in RL}

Temporal distance has been extensively applied in imitation learning~\citep{sermanet2018time}, unsupervised reinforcement learning~\citep{park2024metra, hartikainen2019dynamical, klissarov2023deep}, and goal-conditioned reinforcement learning~\citep{kaelbling1993learning, durugkar2021adversarial, eysenbach2022contrastive, wang2023optimal}. Common methods for learning temporal distance include Laplacian-based representations~\citep{wu2018laplace, wang2021towards,wang2022reachability}, which use spectral decomposition to capture the geometry of the state space; constrained optimization~\citep{park2024metra,wang2023optimal}, which maintains a distance threshold between adjacent states while dispersing others; and temporal contrastive learning~\citep{sermanet2018time, eysenbach2022contrastive}, which brings temporally close states together in representation space while pushing apart negative samples. Each approach, however, has its limitations: Laplacian-based representations can be unstable during training~\citep{gomez2024properlaplacian}, constrained optimization highly depends on deterministic environments~\citep{wang2023optimal}, and temporal contrastive learning often violates the triangle inequality~\citep{wang2023optimal}, a key property of metrics.

Recently, CMD~\citep{myers2024learning} proposes the successor distance, which theoretically guarantees a quasimetric temporal distance by using a specific parameterization of temporal contrastive learning. While CMD is limited to goal-conditioned tasks, we extend this method to sparse reward CMDPs.

\section{Conclusion}
In this work, we introduce ETD, a novel episodic intrinsic motivation method for CMDPs. ETD leverages temporal distance as a measure of state similarity, which is more robust and accurate than previous methods. This allows for more effective calculation of intrinsic rewards, guiding agents to explore environments with sparse rewards. We demonstrate that ETD significantly outperforms existing episodic intrinsic motivation methods in sample efficiency across various challenging domains, establishing it as the state-of-the-art RL approach for sparse reward CMDPs.




\section*{Acknowledgments}
This work was supported by National Natural Science Foundation of China under Grant No. 62192751, in part by the National Science and Technology Innovation 2030 - Major Project (Grant No. 2022ZD0208804), in part by Key R\&D Project of China under Grant No. 2017YFC0704100, the 111 International Collaboration Program of China (No.B25027) and in part by the BNRist Program under Grant No. BNR2019TD01009, in part by the InnoHK Initiative, The Government of HKSAR; and in part by the Laboratory for AI-Powered Financial Technologies.

\bibliography{iclr2025_conference}
\bibliographystyle{iclr2025_conference}


\newpage
\appendix

\section{Limitations} 
Using reward bonuses, whether within an episode or globally, violates the MDP assumption where the reward $r_t$ depends only on the immediately preceding state $s_t$ and action $a_t$. Even if the original RL task is an MDP, introducing a reward bonus that depends on the episode history implicitly transforms the task into a POMDP setting. This transformation is problematic because the value function based on the Markovian state might be biased. Although reward bonuses have proven highly effective for exploration in sparse reward settings, further research is needed to mitigate the impact of the POMDP transformation.
Additionally, the successor distance we used may be infinite in non-ergodic settings, implying an assumption that we're dealing with ergodic MDP problems.

Another limitation of this work is that, our intrinsic reward is purely episodic and doesn't incorporate global bonuses from all experiences, limiting its application in singleton MDPs where global bonuses are essential. 
We believe designing a combined approach using temporal distance for both global and episodic bonuses is a promising direction. A potential method could involve designing a global bonus through maximum state entropy~\cite{seo2021re3,liu2021apt,bae2024tldr} and integrating it with ETD. We leave this exploration for future work.

\section{Theoretical Properties of Successor Distance} \label{sec:SD}
Here we list the most relevant properties of successor distance~\citep{myers2024learning}, which we used in the this paper as the temporal distance.
The definition of successor distance~\citep{myers2024learning} is independent of $\pi$, whereas our successor distance is dependent of $\pi$ and learned in an on-policy manner. We also provide proof that our on-policy form of successor distance still satisfies the quasimetric property, which differs from~\citep{myers2024learning}.

\begin{proposition}\label{pp:stohit}
For all $\pi \in \Pi$, $x, y \in S$, define the random variable $H^{\pi}(x,y)$ as the smallest transit time from  $x$ to $y$, i.e., the hitting time of $y$ from $x$,
    \begin{equation*}
        d_{\text{SD}}^\pi(x, y) =- \log \mathbb{E} \left[ \gamma^{H^\pi(x, y)} \right].
    \end{equation*}
\end{proposition}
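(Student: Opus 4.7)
The plan is to reduce the claim to a factorization of the discounted occupancy via the hitting time, using the strong Markov property. First I would unpack the definitions: expand $d_{\text{SD}}^\pi(x,y)$ using Equation~\ref{eq:temporal_distance} and expand the two occupancies via Equation~\ref{eq:discount_measure}. This turns the statement into showing $p_\gamma^\pi(s_f=y\mid s_0=x) = \mathbb{E}[\gamma^{H^\pi(x,y)}]\cdot p_\gamma^\pi(s_f=y\mid s_0=y)$, since the logarithm of the ratio then immediately yields $-\log\mathbb{E}[\gamma^{H^\pi(x,y)}]$.

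The key step is a ``first hit, then restart'' decomposition. By the strong Markov property, any trajectory from $x$ that visits $y$ at time $k$ has a well-defined first-visit time $H\le k$, and conditioned on $H=h$ the process from time $h$ onward is distributed like a fresh trajectory starting at $y$. Thus
\begin{equation*}
p^\pi(s_k=y\mid s_0=x) = \sum_{h=0}^{k} \Pr(H^\pi(x,y)=h)\,p^\pi(s_{k-h}=y\mid s_0=y).
\end{equation*}
Multiplying by $(1-\gamma)\gamma^k$, summing over $k$, and swapping the order of summation (a Fubini/Tonelli step that is justified because all terms are nonnegative) factors the double sum as
\begin{equation*}
\Bigl(\sum_{h=0}^\infty \gamma^h \Pr(H^\pi(x,y)=h)\Bigr)\cdot(1-\gamma)\sum_{j=0}^\infty \gamma^j p^\pi(s_j=y\mid s_0=y),
\end{equation*}
which is exactly $\mathbb{E}[\gamma^{H^\pi(x,y)}]\cdot p_\gamma^\pi(s_f=y\mid s_0=y)$. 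Taking $\log$ of the ratio finishes the proof.

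The main obstacle, though more bookkeeping than conceptual, is handling the event $\{H^\pi(x,y)=\infty\}$ when $y$ is not reachable from $x$ under $\pi$; I would adopt the standard convention $\gamma^\infty=0$, so that this event contributes nothing to $\mathbb{E}[\gamma^{H^\pi(x,y)}]$ and is consistent with $p_\gamma^\pi(s_f=y\mid s_0=x)=0$ in that case (the ratio on the left is then $0$ and both sides of the identity equal $+\infty$, matching the ergodicity caveat noted in the Limitations appendix). The edge case $x=y$ is immediate since $H^\pi(y,y)=0$ gives $\mathbb{E}[\gamma^{H}]=1$ and $d_{\text{SD}}^\pi(y,y)=0$. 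A secondary subtlety is that the strong Markov property must be invoked for a random stopping time $H$ rather than a deterministic one, but since $H$ is an $\mathcal{F}_t$-stopping time for the natural filtration this is standard and requires no extra assumptions on $\pi$ beyond that it is Markovian.
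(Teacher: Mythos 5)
Your proposal is correct and follows essentially the same route as the paper's proof: both decompose on the first hitting time of $y$ (the paper by conditioning on $H^\pi(x,y)=h$ and restarting from $y$, you via the equivalent first-passage convolution identity), swap the resulting sums, factor out $\mathbb{E}\left[\gamma^{H^\pi(x,y)}\right]$ against $p_\gamma^\pi(s_f=y\mid s_0=y)$, and take logarithms. Your explicit treatment of the $H^\pi(x,y)=\infty$ and $x=y$ edge cases is a minor tidying the paper omits, not a different argument.
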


\begin{proof}
Starting from state $x$ and given $H^\pi(x,y)=h)$, let $p^{\pi}(s_t=y|s_0=x, H^\pi(x,y)=h)$ denotes the probability of  reaching state $y$ at the time $t$, 
we have

\begin{equation}
p^{\pi}(s_t=y|s_0=x, H^\pi(x,y)=h)  = 
\left\{
\begin{array}{ll}
0 & \text{if } t < h \\
p^{\pi}(s_t = y | s_h = y) & \text{if } t \geq h \\
\end{array}.
\right.
\end{equation}
And thus, 
\begin{equation}
\begin{aligned}
 & p_\gamma^\pi\left(s_f=y \mid s_0=x\right) 
 =(1-\gamma) \sum_{t=0}^{\infty} \gamma^t p^{\pi}\left(s_t=y \mid s_0=x\right) \\
& =(1-\gamma) \sum_{t=0}^{\infty} \sum_{h=0}^{\infty} \gamma^t p^{\pi}\left(s_t=y \mid s_0=x, H^\pi(x, y)=h\right) P\left(H^\pi(x, y)=h\right) \\
& =(1-\gamma) \sum_{h=0}^{\infty} p^{\pi}\left(H^\pi(x, y)=h\right) \sum_{t=0}^{\infty} \gamma^t P\left(s_t=y \mid s_0=x, H^\pi(x, y)=h\right) \\
& =(1-\gamma) \sum_{h=0}^{\infty} p^{\pi}\left(H^\pi(x, y)=h\right) \sum_{t=h}^{\infty} \gamma^t p_\gamma^\pi\left(s_t=y \mid s_0=y\right) \\
& = \sum_{h=0}^{\infty} \gamma ^h P\left(H^\pi(x, y)=h\right) \left( (1-\gamma) \sum_{t=h}^{\infty} \gamma^{t-h} p^\pi\left(s_t=y \mid s_0=y\right) \right) \\
& = \sum_{h=0}^{\infty} \gamma ^h P\left(H^\pi(x, y)=h\right) \left( (1-\gamma) \sum_{t=0}^{\infty} \gamma^{t} p^\pi\left(s_t=y \mid s_0=y\right) \right) \\
& = \mathbb{E}_{H^\pi(x, y)} \left[ \gamma^{H^\pi(x, y)} \right] p_\gamma^\pi\left(s_t=y \mid s_0=y\right).
\end{aligned}
\end{equation}

Therefore, 
\begin{equation}
    d_{\text{SD}}^\pi(x, y) = \log \left( \frac{p^{\pi}_\gamma(s_f = y | s_0 = y)}{p^{\pi}_\gamma(s_f = y | s_0 = x)}\right) =  - \log \mathbb{E} \left[ \gamma^{H^\pi(x, y)} \right].
\end{equation}

\end{proof}

\begin{corollary}\label{pp:dethit}
    Assume $H^\pi(x, y)$ is a deterministic value, 
    \begin{equation*}
        d_{\text{SD}}^\pi(x,y) = c \cdot H^\pi(x, y), \  \text{where c is a free value} .
    \end{equation*} 
\end{corollary}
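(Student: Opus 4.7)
The plan is to obtain the corollary as an immediate consequence of Proposition~\ref{pp:stohit}, using only elementary manipulations of the logarithm. Since the proposition has already done the heavy lifting by reducing $d_{\text{SD}}^\pi(x,y)$ to an expression involving $\mathbb{E}[\gamma^{H^\pi(x,y)}]$, the only remaining work is to evaluate that expectation under the deterministic hitting-time assumption.

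First, I would recall from Proposition~\ref{pp:stohit} the identity
\begin{equation*}
d_{\text{SD}}^\pi(x,y) = -\log \mathbb{E}\bigl[\gamma^{H^\pi(x,y)}\bigr].
\end{equation*}
Next, I would invoke the hypothesis that $H^\pi(x,y)$ is deterministic. Writing $h \coloneqq H^\pi(x,y)$, the random variable $\gamma^{H^\pi(x,y)}$ degenerates to the constant $\gamma^{h}$, so the expectation collapses to $\mathbb{E}[\gamma^{H^\pi(x,y)}] = \gamma^{H^\pi(x,y)}$. Substituting back gives
\begin{equation*}
d_{\text{SD}}^\pi(x,y) = -\log\bigl(\gamma^{H^\pi(x,y)}\bigr) = (-\log \gamma)\cdot H^\pi(x,y),
\end{equation*}
which is exactly the claimed form with $c = -\log \gamma$. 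Note that $\gamma \in (0,1)$ guarantees $c > 0$, which is consistent with $d_{\text{SD}}^\pi$ being a nonnegative distance and with larger hitting times corresponding to larger successor distances.

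There is essentially no serious obstacle: the only subtlety worth flagging is the interpretation of the constant $c$ as a "free value." I would clarify in the proof that $c$ is free only in the sense that it depends on the discount $\gamma$ (and not on $x$, $y$, or $\pi$), so that the successor distance recovers the raw hitting time up to a global positive rescaling whenever the hitting time is deterministic. This immediately implies, in deterministic MDPs where $H^\pi$ is itself deterministic, that minimizing $d_{\text{SD}}^\pi$ is equivalent to minimizing the number of steps between states, justifying the use of $d_\phi$ as a genuine temporal distance for the episodic bonus.
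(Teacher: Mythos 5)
Your proof is correct and matches the paper's argument exactly: both apply Proposition~\ref{pp:stohit} and note that a deterministic hitting time collapses the expectation, giving $d_{\text{SD}}^\pi(x,y) = H^\pi(x,y)\cdot\log\frac{1}{\gamma}$, i.e.\ $c = -\log\gamma$. Your added remark that $c>0$ depends only on $\gamma$ is a harmless (and helpful) clarification beyond what the paper states.
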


\begin{proof}
Following Proposition~\ref{pp:stohit},
\begin{equation}
    d_{\text{SD}}^\pi(x,y) = - \log \gamma^{H^\pi(x, y)}  =  H^\pi(x, y) \cdot \log \frac{1}{\gamma}.
\end{equation}
\end{proof}

\begin{proposition}\label{pp:quasimetric}
    $d_{\text{SD}}$ is a quasimetric over $S$, satisfying the Positivity, Identity and triangle inequality.
\end{proposition}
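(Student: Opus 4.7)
The plan is to verify the three quasimetric axioms separately, using the hitting-time representation $d^\pi_{\text{SD}}(x,y) = -\log \mathbb{E}[\gamma^{H^\pi(x,y)}]$ from Proposition~\ref{pp:stohit} as the main tool. Positivity and identity are essentially by inspection; the real content is in the triangle inequality, where the strong Markov property for hitting times carries the argument.

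For Positivity, since $H^\pi(x,y) \geq 0$ almost surely and $\gamma \in (0,1)$, we have $\gamma^{H^\pi(x,y)} \leq 1$ pointwise, hence $\mathbb{E}[\gamma^{H^\pi(x,y)}] \leq 1$, and negating the logarithm gives $d^\pi_{\text{SD}}(x,y) \geq 0$. For Identity, when $x=y$ the hitting time of $y$ from $y$ is $H^\pi(y,y)=0$ deterministically, so $\gamma^{H^\pi(y,y)}=1$ and the log vanishes; equivalently, this can be read directly off the ratio form of the definition in Equation~\ref{eq:temporal_distance}.

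The triangle inequality $d^\pi_{\text{SD}}(x,z) \leq d^\pi_{\text{SD}}(x,y) + d^\pi_{\text{SD}}(y,z)$ is equivalent, after applying $-\log$, to the multiplicative bound
\begin{equation*}
\mathbb{E}\bigl[\gamma^{H^\pi(x,z)}\bigr] \;\geq\; \mathbb{E}\bigl[\gamma^{H^\pi(x,y)}\bigr]\cdot \mathbb{E}\bigl[\gamma^{H^\pi(y,z)}\bigr].
\end{equation*}
I would prove this by coupling all three hitting times on a single sample path starting from $x$. Let $\tau_y := H^\pi(x,y)$ and, on $\{\tau_y < \infty\}$, let $\tau_z'$ be the additional time to hit $z$ after reaching $y$. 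Since the first hitting time from $x$ to $z$ is bounded above by any particular passage time through $y$, we have $H^\pi(x,z) \leq \tau_y + \tau_z'$ pointwise, so $\gamma^{H^\pi(x,z)} \geq \gamma^{\tau_y}\gamma^{\tau_z'}$. Applying the strong Markov property at $\tau_y$, the post-$\tau_y$ process has the same law as a trajectory started at $y$, so $\tau_z'$ is independent of $\mathcal{F}_{\tau_y}$ and distributed as $H^\pi(y,z)$. Taking expectations and factoring then yields the product inequality.

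The main obstacle is handling the event $\{\tau_y = \infty\}$, on which $y$ is never reached and $\tau_z'$ is not defined. With the convention $\gamma^{\infty}=0$ (consistent with $\gamma<1$), the pointwise bound $\gamma^{H^\pi(x,z)} \geq \gamma^{\tau_y+\tau_z'}$ still holds trivially since the right side is zero there, and the factorization step uses only $\mathbb{E}[\gamma^{\tau_y}\gamma^{\tau_z'}\mathbf{1}_{\{\tau_y<\infty\}}] = \mathbb{E}[\gamma^{\tau_y}\mathbf{1}_{\{\tau_y<\infty\}}]\cdot \mathbb{E}[\gamma^{H^\pi(y,z)}]$, after which one notes that $\mathbb{E}[\gamma^{\tau_y}\mathbf{1}_{\{\tau_y=\infty\}}]=0$. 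The remaining measurability issues around applying the strong Markov property to stopping times are standard, and the ergodicity caveat flagged in the Limitations section ensures the expectations are well defined.
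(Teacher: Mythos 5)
Your argument for positivity is the same as the paper's, and your triangle-inequality argument is correct but takes a genuinely different route. The paper invokes a triangle inequality for hitting times from an external reference and then argues that $f(t)=-\log\mathbb{E}[\gamma^{t}]$ is concave with $f(0)=0$, hence subadditive, to pass from the hitting-time inequality to $d_{\text{SD}}$; this step is stated somewhat loosely, since subadditivity of a deterministic function is applied to expectations of dependent random arguments. Your proof instead establishes the multiplicative bound $\mathbb{E}[\gamma^{H^\pi(x,z)}]\geq\mathbb{E}[\gamma^{H^\pi(x,y)}]\,\mathbb{E}[\gamma^{H^\pi(y,z)}]$ directly: the pathwise bound $H^\pi(x,z)\leq \tau_y+\tau_z'$, the strong Markov property at $\tau_y$ (valid because $\pi$ is stationary, so the state process is Markov and $X_{\tau_y}=y$ on $\{\tau_y<\infty\}$), and the convention $\gamma^{\infty}=0$ to absorb the event $\{\tau_y=\infty\}$. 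This is self-contained, needs no external lemma, and handles stochasticity and possible non-hitting cleanly; the paper's route is shorter on the page but leans on the cited lemma and on the subadditivity step.

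One piece is missing: the Identity axiom is an equivalence, and you only verify the direction $x=y\Rightarrow d_{\text{SD}}^\pi(x,y)=0$. You also need $d_{\text{SD}}^\pi(x,y)=0\Rightarrow x=y$, which the paper gets by noting that for $x\neq y$ the hitting time satisfies $H^\pi(x,y)\geq 1$ almost surely, so $\mathbb{E}[\gamma^{H^\pi(x,y)}]\leq\gamma$ and hence $d_{\text{SD}}^\pi(x,y)\geq\log\frac{1}{\gamma}>0$. This fits naturally into your hitting-time framework and closes the gap with one line; without it the proposition as stated (a quasimetric, not a quasipseudometric) is not fully proved.
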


\begin{proof}
A distance function $d: \mathcal{S} \times \mathcal{S} \rightarrow \mathcal{R}$ is called quasimetric if it satisfies the following for any $x, y, z \in \mathcal{S}$.
\begin{enumerate}
    \item Positivity: $d(x,y) \geq 0$
    \item  Identity: $d(x,y) = 0 \Leftrightarrow x = y$
    \item  Triangle inequality: $d(x,z) \leq d(x,y) + d(y,z)$
\end{enumerate}
\paragraph{Positivity:} From Proposition~\ref{pp:stohit}, we have $d_{\text{SD}} = - \log \mathbb{E}_{ H^\pi(x, y)} \left[ \gamma^{H^\pi(x, y)} \right] \geq 0$.

\paragraph{Identity:}
\begin{itemize}
    \item $\Rightarrow$: $d_{\text{SD}}^\pi(x,y) = 0$ if and only if $p^{\pi}_\gamma(s_f = y | s_0 = x) = p^{\pi}_\gamma(s_f = y | s_0 = y)$, which occurs when $x = y$. For $x \neq y$, $H^{\pi}(x, y) \geq 1$, so by Proposition \ref{pp:stohit}, $d_{\text{SD}}^\pi(x,y) \geq \log \frac{1}{\gamma}$.
    \item  $\Leftarrow$: When $x = y$, $p^{\pi}_\gamma(s_f = y | s_0 = x) = p^{\pi}_\gamma(s_f = y | s_0 = y)$, thus $d_{\text{SD}}^\pi(x,y) = 0$.
\end{itemize}

\paragraph{Triangle Inequality:} 
According to~\citep{triangle2005} (Lemma 4.1), the hitting time $H^{\pi}(x, y)$ satisfies the triangle inequality, that is, $H^{\pi}(x, y) \leq H^{\pi}(x, z) + H^{\pi}(y, z)$.  
Let $f(H^{\pi}(x, y)) = - \log\mathbb{E}[\gamma^{H^{\pi}(x, y)}]$, $\log\mathbb{E} [\gamma^{H^{\pi}(x, y)}]$ is a convex function, and thus $f$ is a concave function. Furthermore, $f(0) = 0$. By the property of concave functions~\citep{logtriangle2017}, $f$ is subadditive, i.e., $f(a+b) \leq f(a) + f(b)$ for all $a$ and $b$. As desired, $f(H^{\pi}(x, y)) \leq f(H^{\pi}(x, z)+H^{\pi}(y, z)) \leq f(H^{\pi}(x, z)) + f(H^{\pi}(y,z))$ , and thus , $d_{\text{SD}}^\pi(x,y) = f(H^{\pi}(x,z))$ satisfying the triangle inequality.

\end{proof}

\begin{proposition}\label{pp:unique}
    For $x \neq y$, the unique solution to the the loss function in Equation~\ref{eq:infonce} with the parametrization in Equation~\ref{parameter} is
    \begin{equation*}
        d_{\phi^*}(x, y) = \log \left( \frac{p^{\pi}_\gamma(s_f = y | s_0 = y)}{p^{\pi}_\gamma(s_f = y | s_0 = x)}\right).
    \end{equation*}
\end{proposition}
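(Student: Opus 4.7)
The plan is to leverage the well-known unique-minimizer property of the symmetrized InfoNCE loss and then exploit the quasimetric structure of the parametrization to pin down $d_{\phi^*}$ uniquely.

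First, I would work at the population limit of $\mathcal{L}_\theta$. Writing $p(x,y) := p_s(x)\,p_\gamma^\pi(s_f=y\mid s_0=x)$ for the joint, with marginals $p_s(x)$ and $p_{s_f}(y)$, the standard InfoNCE analysis \citep{ma2018noise, poole2019variational} identifies the softmax inside each term of Equation~\ref{eq:infonce} with the Bayes-optimal classifier over ``which index is the positive.'' The first term is maximized if and only if $f(x,y) = \log \tfrac{p_\gamma^\pi(s_f=y\mid s_0=x)}{p_{s_f}(y)} + c_1(x)$ for an arbitrary function $c_1$, and the second term if and only if $f(x,y) = \log \tfrac{p_\gamma^\pi(s_f=y\mid s_0=x)}{p_{s_f}(y)} + c_2(y)$. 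Since both terms are separately bounded, at a joint minimizer they must both be saturated, forcing $c_1(x) = c_2(y)$ for all $x,y$, so both reduce to a common constant $C$. Hence the unique symmetrized optimum is
\begin{equation*}
f^*(x,y) = \log \frac{p_\gamma^\pi(s_f=y\mid s_0=x)}{p_{s_f}(y)} + C.
\end{equation*}

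Next, I would impose the parametrization $f_{(\phi,\psi)}(x,y) = c_\psi(y) - d_\phi(x,y)$ together with the quasimetric identity $d_\phi(y,y)=0$, which is enforced by construction in the MRN architecture (Appendix~\ref{appendix:mrn}). Setting $x=y$ in the optimality equation determines the potential network up to the same constant,
\begin{equation*}
c_{\psi^*}(y) = \log \frac{p_\gamma^\pi(s_f=y\mid s_0=y)}{p_{s_f}(y)} + C,
\end{equation*}
and substituting back causes both $C$ and the marginal $p_{s_f}(y)$ to cancel, leaving
\begin{equation*}
d_{\phi^*}(x,y) = c_{\psi^*}(y) - f^*(x,y) = \log \frac{p_\gamma^\pi(s_f=y\mid s_0=y)}{p_\gamma^\pi(s_f=y\mid s_0=x)},
\end{equation*}
which is precisely the successor distance of Equation~\ref{eq:temporal_distance}. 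In particular, the residual freedom in $C$ affects only $c_\psi$, so $d_{\phi^*}$ is unique as claimed.

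The step I expect to require the most care is making the symmetrized InfoNCE uniqueness argument fully rigorous. Each InfoNCE term is, up to constants, a cross entropy between an empirical categorical distribution and a softmax over the batch, so strict convexity of KL pins down the softmax probabilities and hence fixes the energy $f$ up to an additive function of the ``conditioning'' variable of that term. The crucial move is to intersect the function-of-$x$ freedom from the first term with the function-of-$y$ freedom from the second, which collapses to a single constant; without the symmetrization this degeneracy would propagate into $d_\phi$ and destroy uniqueness. A secondary subtlety worth noting is the expressiveness assumption: one needs $(c_\psi, d_\phi)$ to be rich enough to realize $f^*$ while still respecting $d_\phi(y,y)=0$, which is exactly what the MRN parametrization provides.
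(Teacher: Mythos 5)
Your proposal is correct and follows essentially the same route as the paper: characterize the population InfoNCE optimum as $f^*(x,y)=\log\bigl(p_\gamma^\pi(s_f=y\mid s_0=x)/p_{s_f}(y)\bigr)$ up to a free constant, then use the parametrization $f=c_\psi(y)-d_\phi(x,y)$ together with $d_\phi(y,y)=0$ to cancel the potential and the marginal by comparing $f^*(y,y)$ with $f^*(x,y)$. The only difference is that you spell out why the symmetrized loss collapses the per-term degeneracies (a function of $x$ and a function of $y$) to a single constant, a step the paper handles by citation.
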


\begin{proof}

If the batch size is large enough, the optimal energy function in Equation~\ref{eq:infonce} satisfy 
\begin{equation}
    \label{optimal_f}
    f_\theta^*(x, y) = \log \left( \frac{p^\pi_\gamma(s^f = y | s_0 = x)}{C \cdot p_{s_f}(y)} \right), \ \text{where C is a free value}. 
\end{equation}

We can further decompose the optimal function into a potential function that depends solely on the future state minus the successor distance function,
\begin{equation}
    f_\theta^*(x,y) = \underbrace{- \log \left( \frac{p^\pi_\gamma(s^f = y | s_0 = y)}{C \cdot p_{s_f}(y)} \right)}_{c_{\psi}(y)} -\underbrace{\log \left( \frac{p^{\pi}_\gamma(s_f = y | s_0 = y)}{p^{\pi}_\gamma(s_f = y | s_0 = x)}\right)}_{d_{\phi}(x, y)}. 
\end{equation}

\begin{equation*}
\begin{aligned}
\log \left( \frac{p^{\pi}_\gamma(s_f = y | s_0 = y)}{p^{\pi}_\gamma(s_f = y | s_0 = x)} \right)
&=f^*_{\theta}(y,y) - f^*_{\theta}(y,x) \\
&= c_{\psi}^*(y) - d_{\phi}^*(y, y) - (c_{\psi}^*(y) -d_{\phi}^*(x, y)) \\
&=d_{\phi}^*(x, y))
\end{aligned}
\end{equation*}
\end{proof}

\newpage

\section{Additional Experiments}

\subsection{Experiments in Continuous Action Space}
We further conduct experiments in DeepMind Control~\citep{tassa2018dmc} (DMC) and MetaWorld~\citep{yu2020metaworld} and HalfCheetahVelSparse.

\paragraph{DMC} We selected three tasks with relatively sparse rewards from the DMC environment: Acrobot-swingup\_sparse, Hopper-hop, and Cartpole-swingup\_sparse. As shown in Fig.~\ref{fig:dmc}, our results demonstrate that ETD consistently performs well, showing significant improvements over PPO. We also reproduced two baselines that excelled in discrete tasks—NovelD and E3B—but found they couldn't maintain consistent performance across all three continuous control tasks.

\paragraph{MetaWorld} Meta-World environments typically use dense rewards. We modified these to provide rewards only when tasks succeed. We tested on Reach, Hammer, and Button Press tasks, finding that PPO without intrinsic rewards performed well. This suggests that Meta-World may not be ideal benchmarks for exploration problems. To our knowledge, intrinsic motivation methods haven't been extensively tested on Meta-World, making it less suitable for comparing exploration-based methods.

\paragraph{HalfCheetahVelSparse} We modified the Mujoco HalfCheetah environment's forward reward function to provide rewards only when the target velocity is reached. Our experiments revealed that PPO already performs well in this task. Adding intrinsic motivation methods didn't significantly improve performance, likely due to the low exploration difficulty. These environments require relatively low exploration capabilities, do not necessitate intrinsic motivation methods, and are unsuitable as benchmarks for comparing exploration methods.

\begin{figure}[!h]
    \centering
    \includegraphics[width=0.85\textwidth]{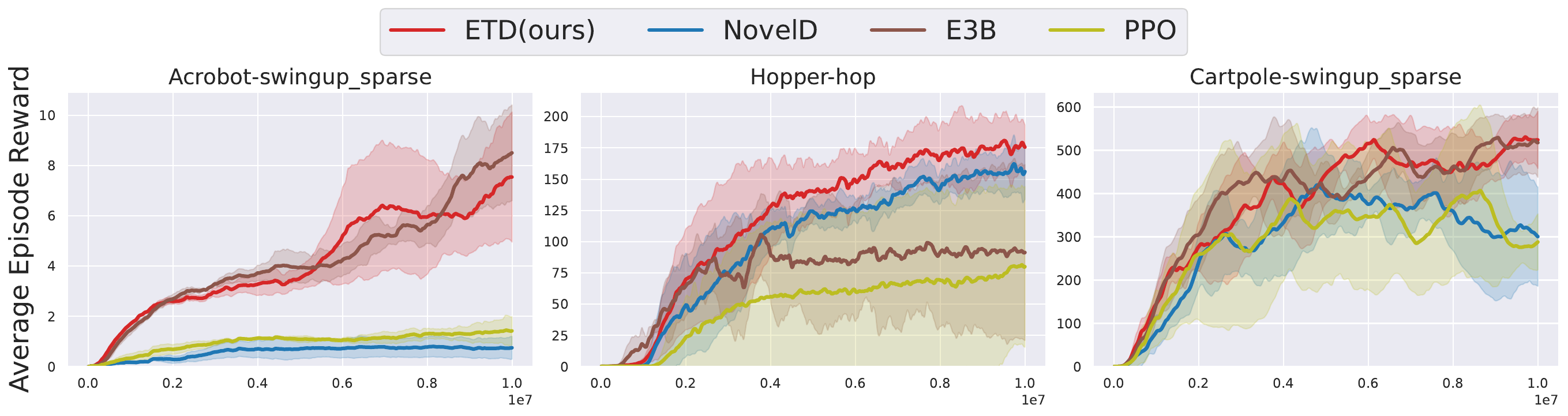}
    \caption{DMC Experiments. All results averaged over 5 seeds.}
    \label{fig:dmc}
\end{figure}
\vspace{-10pt}
\begin{figure}[!h]
    \centering
    \includegraphics[width=0.85\textwidth]{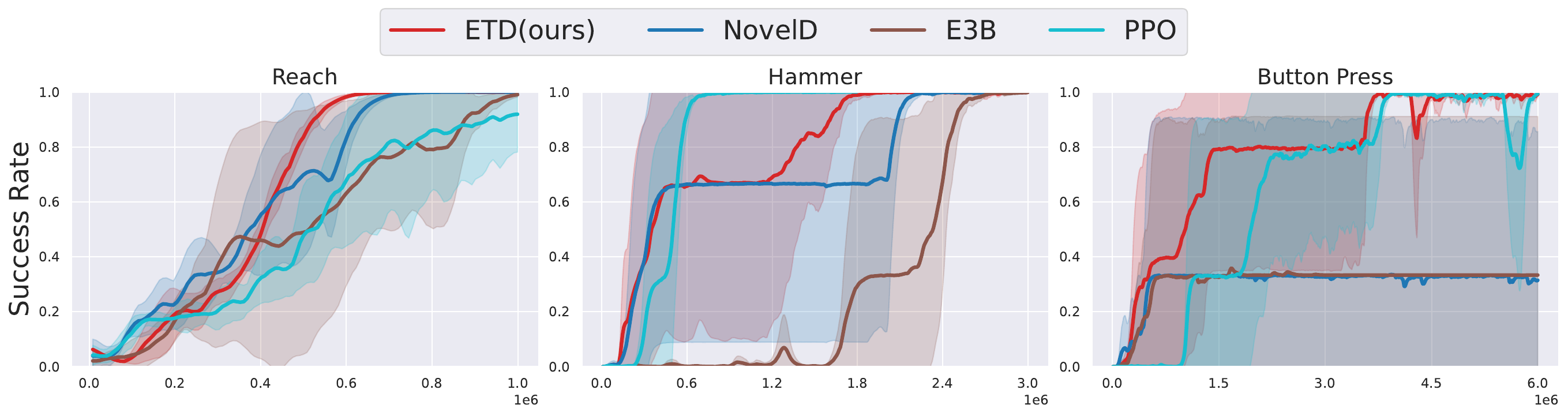}
    \caption{MetaWorld Experiments. All results averaged over 5 seeds.}
    \label{fig:metaworld}
\end{figure}
\vspace{-10pt}
\begin{figure}[!h]
    \centering
    \includegraphics[width=0.3\textwidth]{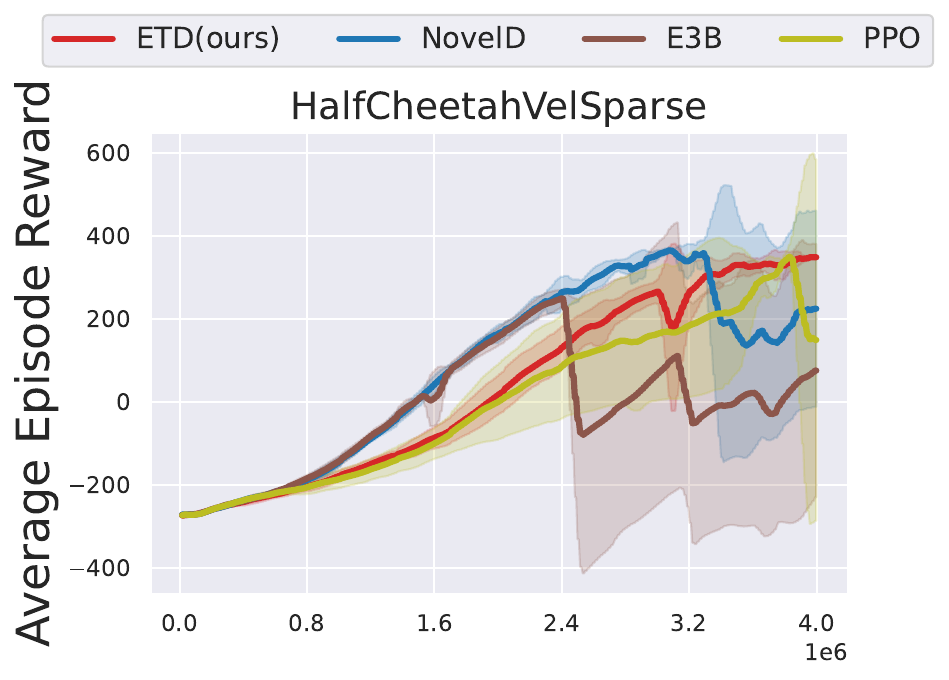}
    \caption{HalfcheetahVelSparse Experiments. All results averaged over 5 seeds.}
    \label{fig:halfcheetah}
\end{figure}
\subsection{Ablations of Energy Function and Contrasitive Loss}
Previous work in contrastive RL~\citep{bortkiewicz2024accelerate} has utilized various energy functions and contrastive loss functions for goal-conditioned tasks. We followed their methodology, examining the impact of different energy functions and contrastive losses on temporal distance and performance. 

Specifically, we considered four energy functions: Cosine, L2, MRN, and MRN-POT (which we used in this study), defined as follows. 

\begin{equation}
\begin{aligned}
f_{\phi, \cos }(x, y) & =\frac{\langle\phi(x), \phi(y)\rangle}{\|\phi(x)\|_2\|\phi(y)\|_2}, \\
f_{\phi, \mathrm{l_2}}(x, y) & =-\|\phi(x)-\phi(y)\|_2, \\
f_{\phi, \mathrm{MRN}}(x, y) & =-d_{\phi}(x,y), \\
f_{\phi, \psi, \mathrm{MRN+POT}}(x, y) & =\psi(y) - d_{\phi}(x,y). \\
\end{aligned}
\end{equation}

For contrastive loss functions, we evaluated InfoNCE Forward, InfoNCE Backward, InfoNCE Symmetric (which we employed in this work), described below.

\begin{equation}
\begin{aligned}
\mathcal{L}_{\text{InfoNCE-forward}}(\mathcal{B} ; f) &= -\sum_{i=1}^B \log \left(\frac{e^{f(x_i, y_i)}}{\sum_{j=1}^B e^{f(x_i, y_j)}}\right), \\
\mathcal{L}_{\text{InfoNCE-backward}}(\mathcal{B} ; f) &= -\sum_{i=1}^B \log \left(\frac{e^{f(x_i, y_i)}}{\sum_{j=1}^B e^{f(x_j, y_i)}}\right), \\
\mathcal{L}_{\text{InfoNCE-symmetric}}(\mathcal{B} ; f) &= \mathcal{L}_{\text{InfoNCE-forward}}(\mathcal{B} ; f) + \mathcal{L}_{\text{InfoNCE-backward}}(\mathcal{B} ; f). \\
\end{aligned}
\end{equation}

Our experiments were conducted on a noisy version of the 17x17 SprialMaze, which is similar to the Disco Maze in ~\citep{badia2020never}. This version introduced random colors to the walls of the maze, making representation learning more challenging.

\begin{figure}[!h]
    \includegraphics[width=0.95\textwidth]{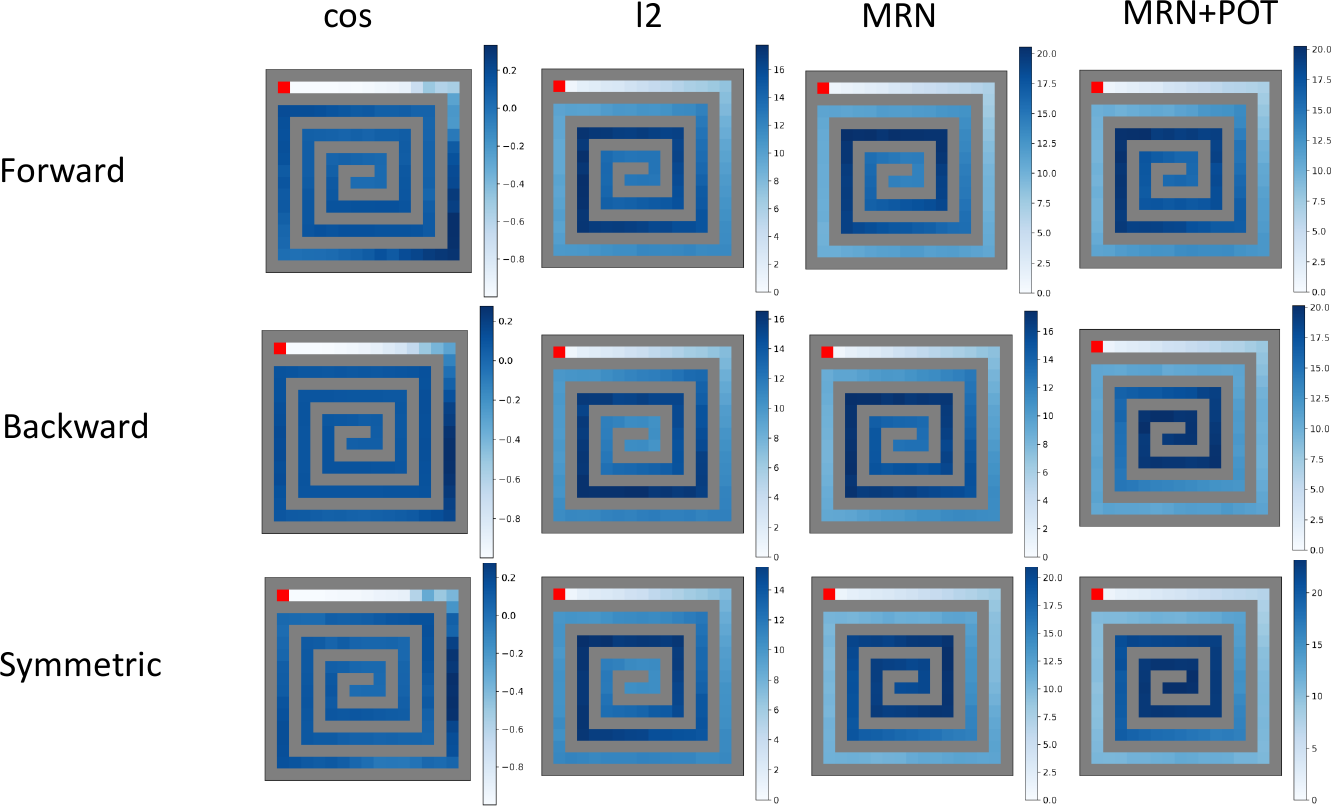}
    \caption{Ablations of Energy Function and Contrastive Loss in 17x17 SpiralMaze-Noisy.}
    \label{fig:ablation_maze}
\end{figure}

Fig.~\ref{fig:ablation_maze} illustrates the temporal distances learned for each energy function and contrastive loss function. The results indicate that the cosine energy function performed poorly in distinguishing distant states. The L2 and MRN functions produced similar results, while MRN-POT exhibited the best performance, consistent with our theoretical expectations. For contrastive loss functions, the overall differences were minimal.

Additionally, we conducted exploration tasks in the MiniGrid-ObstructedMaze-1Q environment, as shown in~\ref{fig:ablation_energy_loss}. The conclusions from these tasks aligned with the results from the SprialMaze: improved learning of temporal distance correlates with higher exploration efficiency.

\begin{figure}[!h]
    \centering
    \subfigure[Ablation of Energy Function]{
        \includegraphics[width=0.35\textwidth]{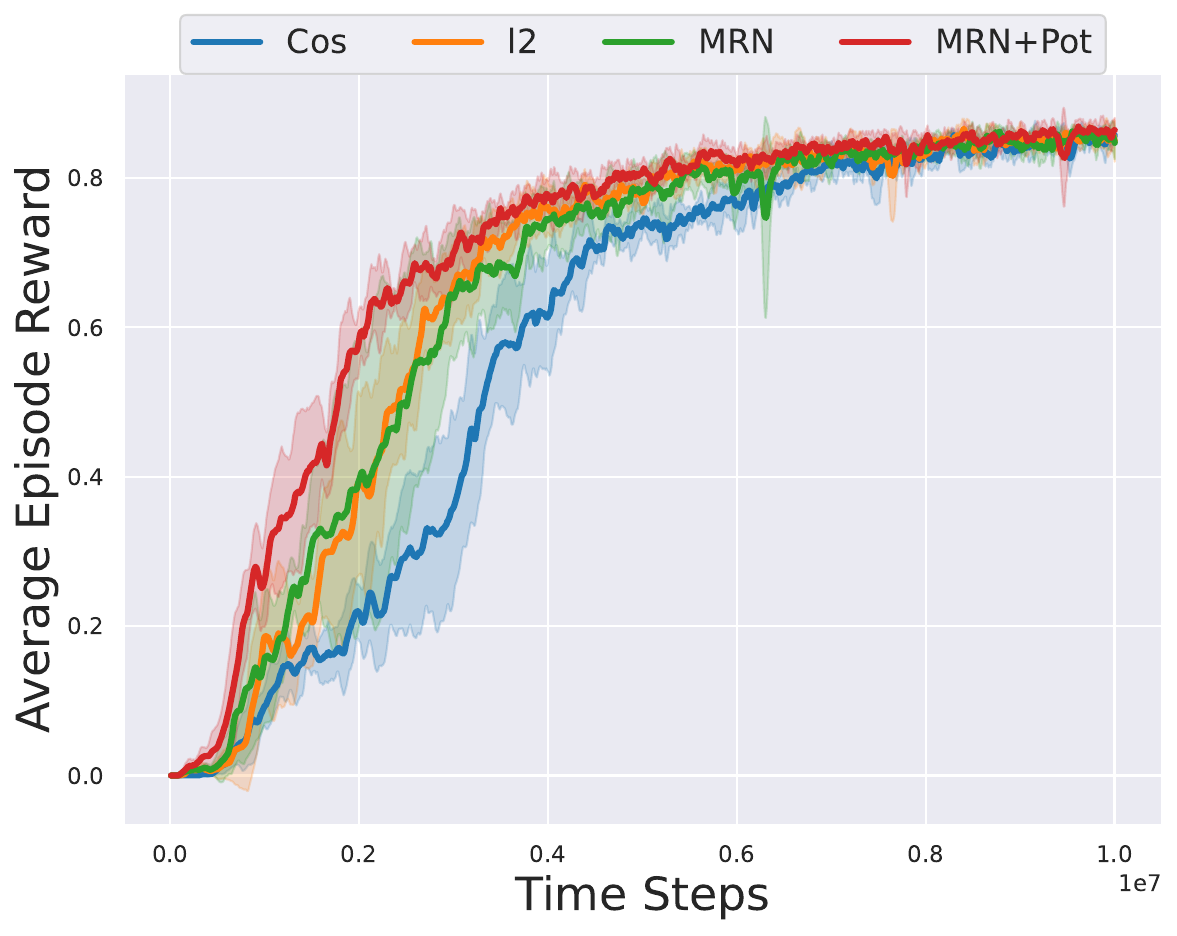}
    }
    \subfigure[Ablation of Contrastive Loss]{
        \includegraphics[width=0.35\textwidth]{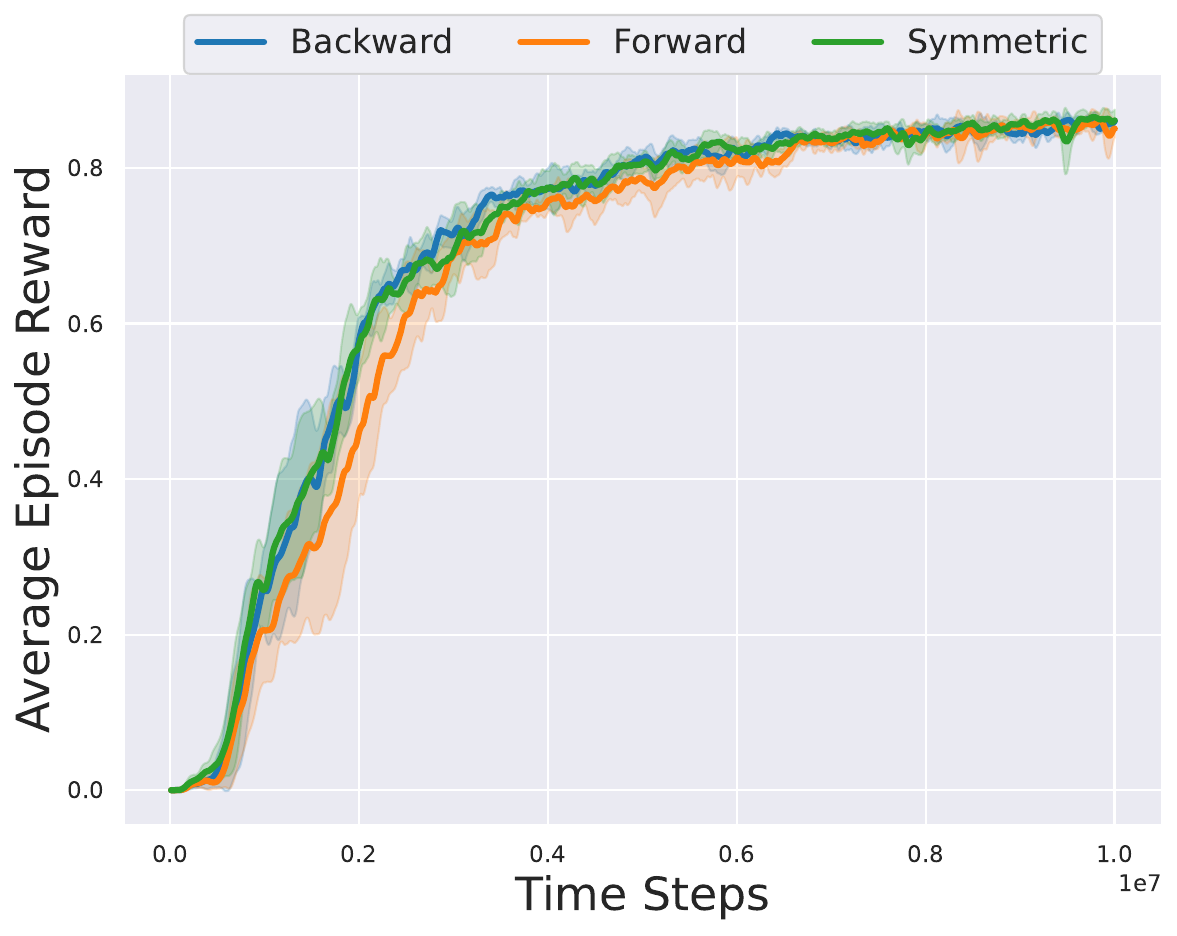}
    }
    \caption{Ablations of Energy Function and Contrastive Loss in MiniGrid-ObstructedMaze-1Q.}
    \label{fig:ablation_energy_loss}
\end{figure}

\subsection{More Complex Examples}
We add an example of a more complex maze to demonstrate that our temporal distance method can effectively capture the similarity between states. As shown in the Fig.~\ref{fig:hard_toyexample}, our learned temporal distance remains very close to the ground truth, while inverse dynamics and likelihood methods fail.

\begin{figure}[!h]
    \centering
    \subfigure[Inverse dynamic]{
        \includegraphics[width=0.4\textwidth]{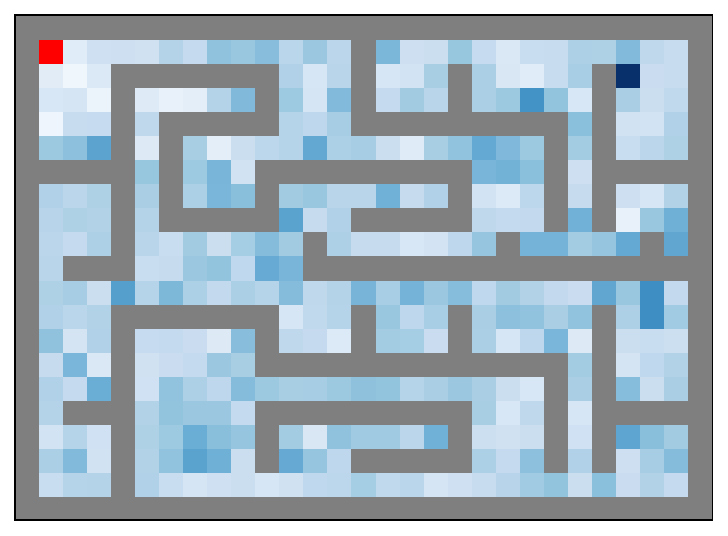}
    }
    \subfigure[Likelihood]{
        \includegraphics[width=0.4\textwidth]{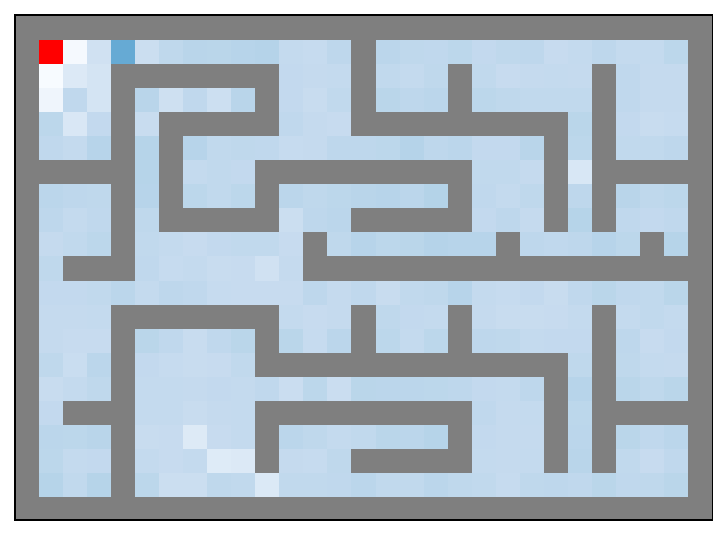}
    } \\
    \subfigure[Ours]{
        \includegraphics[width=0.4\textwidth]{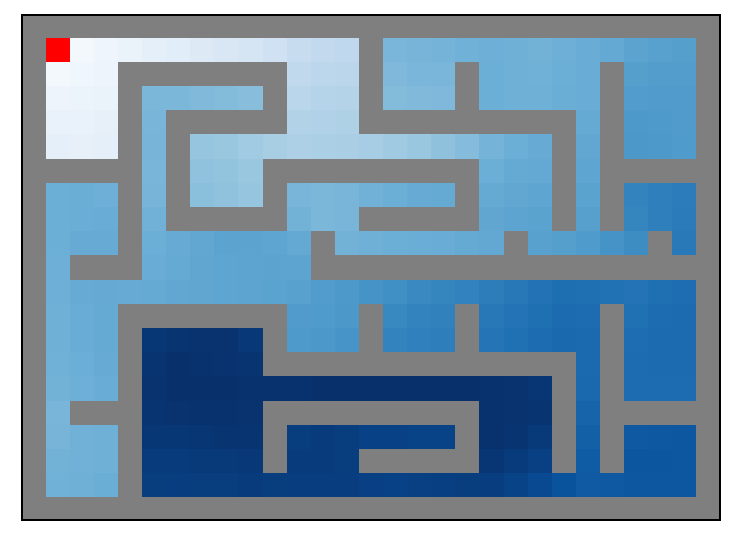}
    }
    \subfigure[Ground truth]{
        \includegraphics[width=0.4\textwidth]{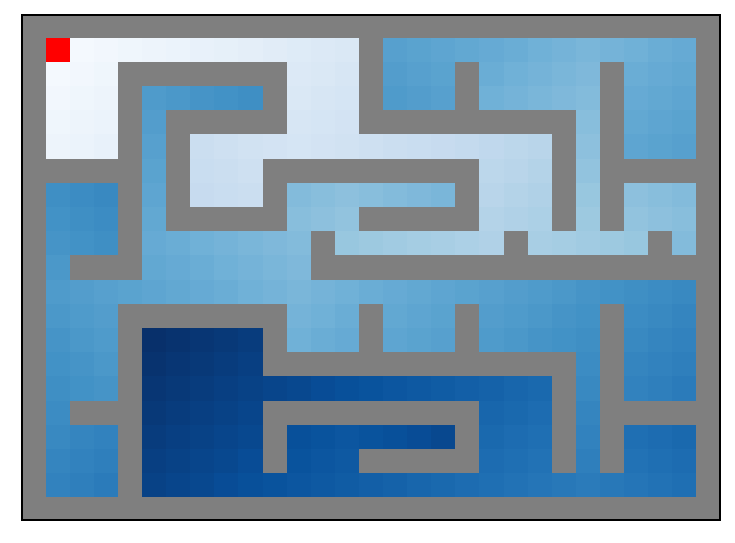}
    }
    \caption{More Complex Examples}
    \label{fig:hard_toyexample}
\end{figure}




\color{black}
\section{Environment Details}\label{sec:app_env}
We describe each task environment utilized in our experiments.
\subsection{MiniGrid}
\begin{figure}[htbp]
    \centering
    \centering
    \subfigure[MultiRoom-N6]{
        \includegraphics[width=0.22\textwidth]{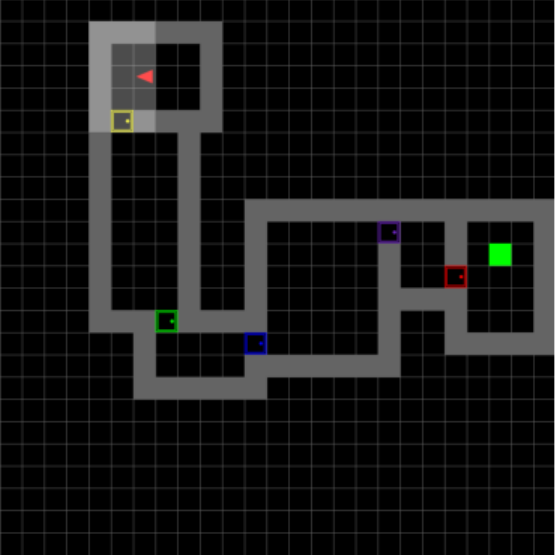}
    }
    \subfigure[DoorKey-16x16]{
        \includegraphics[width=0.22\textwidth]{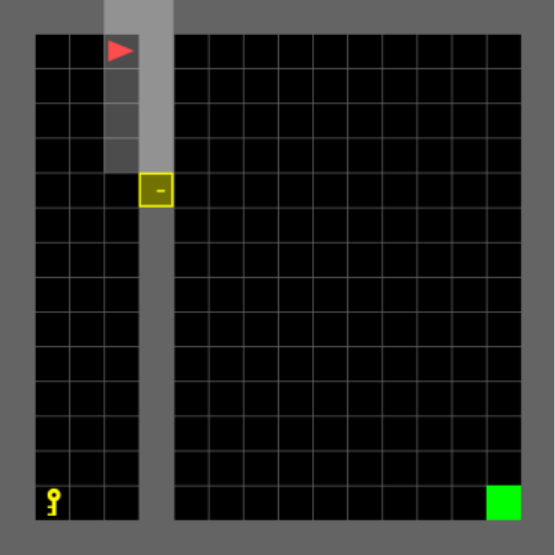}
    }
    \subfigure[KeyCorridorS6R3]{
        \includegraphics[width=0.22\textwidth]{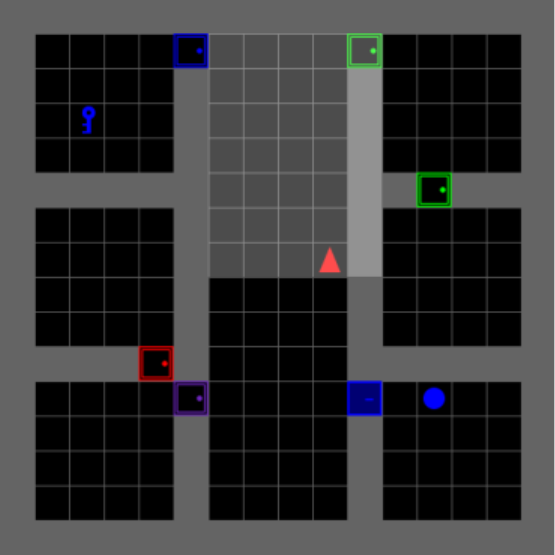}
    }
    \subfigure[ObstructedMaze-Full]{
        \includegraphics[width=0.22\textwidth]{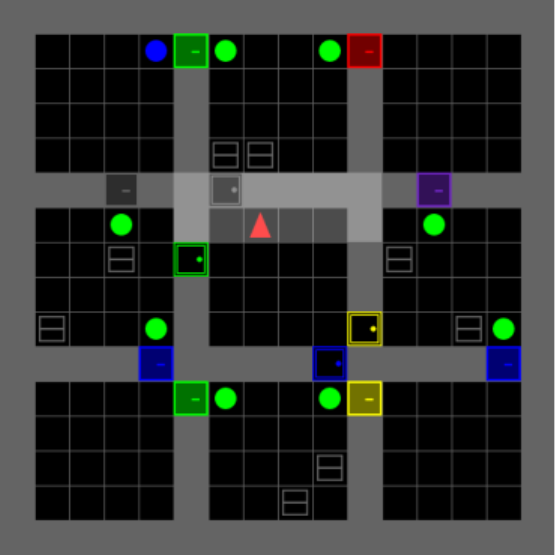}
    }
    \caption{Rendering of MiniGrid Environments used in this work.}
    \label{fig:render_minigrid}
\end{figure}
\begin{itemize}
    \item \textbf{MultiRoom-N6}: An agent initiates from the initial room and navigates towards the goal located in the final room. The rooms are interconnected via a closed yet unlocked door. The variable 'N' denotes the number of rooms.
    
    \item \textbf{DoorKey-16x16}: The agent is tasked with locating a key in the first room, unlocking and opening a door, and reaching the goal in the second room. 16 refers to the maze size.
    
    \item \textbf{KeyCorridorS6R3}: The environment comprises a corridor and six rooms separated by walls and doors. The agent, starting from the central corridor, must find a key in an unlocked room and then reach the goal behind another locked door. The prefix 'S' denotes the number of rooms, while 'R' signifies the size of each room. 'S6R3' represents the most complex scenario within the KeyCorridor series of MiniGrid.
    
    \item \textbf{ObstructedMaze}: The agent's objective is to retrieve a box situated in a corner of a 3x3 maze. Doors are locked, keys are hidden in boxes, and doors are obstructed. The ObstructedMaze is the most challenging environment within MiniGrid, featuring multiple configurations. The notation "NDl" specifies the quantity of locked doors. The presence of a hidden key within a box is denoted by "h," and an obstructed door by a ball is indicated by "b." "Q" signifies the number of quarters that will contain doors and keys out of the nine quarters the map inherently possesses.
    \begin{itemize}
        \item \textbf{ObstructedMaze-2Dlh}: Features 2 locked doors with the key concealed in a box.
        \item \textbf{ObstructedMaze-2Dlhb}: Includes 2 locked doors, with the key hidden in a box, and a ball obstructing a door.
        \item \textbf{ObstructedMaze-1Q}: Similar to ObstructedMaze-2Dlhb but with a larger maximum number of steps, and the map consists of a quarter of a 3x3 maze.
        \item \textbf{ObstructedMaze-2Q}: The map encompasses half of the 3x3 maze.
        \item \textbf{ObstructedMaze-Full}: The full  3x3  maze.
    \end{itemize}
\end{itemize}

\subsection{Miniworld}

In this work, we focus on MiniWorld-Maze with different maze sizes, where MazeS3 means the map of maze contains $3\times3$ tiles. Figure~\ref{fig:render_miniworld} shows the first-person view observation and the top view of the mazes. In each episode, the agent and goal are initialized at both ends of the diameter of the maze map, which ensures that they are far away enough so that the agent can not easily see the goal without exploration. A random maze will be generated in each episode. There are three possible actions: (1) move forward, (2) turn left, and (3) turn right. The agent will move $0.2\times tile\_length$ if moving forward, where tile length is the length of one side of a tile. The agent will rotate 90 degrees if turning right/left. The time budget of each episode is $24\times tile\_num$. The agent will not receive any positive reward if it can not navigate the goal under the time budget.

\begin{figure}[t!]
    \centering
    \subfigure[First-person view \newline observation]{
        \includegraphics[width=0.22\textwidth]{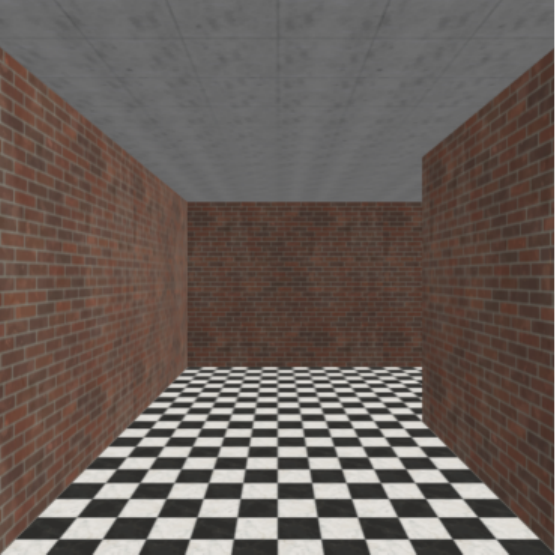}
    }
    \subfigure[MazeS3]{
        \includegraphics[width=0.22\textwidth]{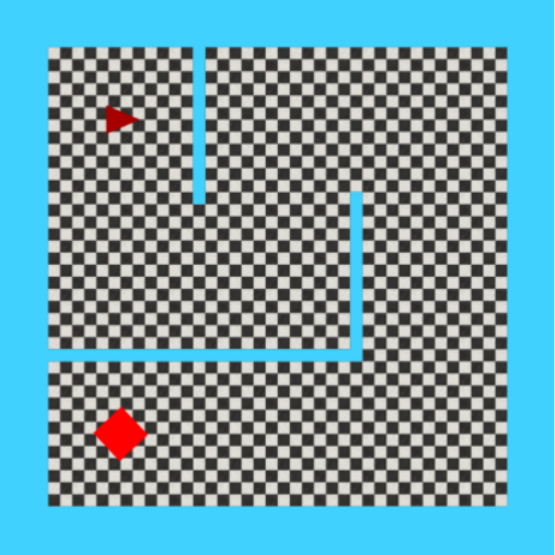}
    }
    \subfigure[MazeS4]{
        \includegraphics[width=0.22\textwidth]{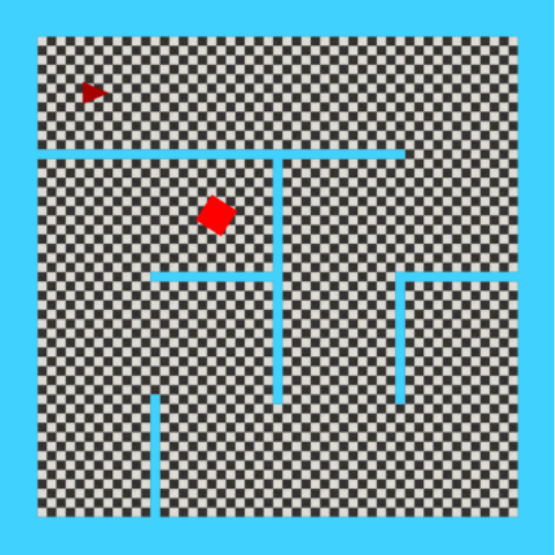}
    }
    \subfigure[MazeS5]{
        \includegraphics[width=0.22\textwidth]{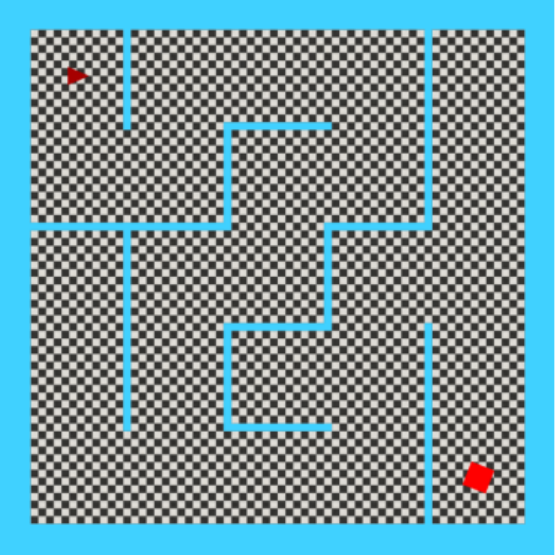}
    }
    \caption{Rendering of MiniWorld Environments in this work.}
    \label{fig:render_miniworld}
\end{figure}

\section{Implementation Details}
In the experiments, all methods are implemented based on PPO. We primarily follow the implementation of DEIR\footnote{https://github.com/swan-utokyo/deir}, which is based on Stable Baselines 3 (version 1.1.0).

\subsection{Network Structures}
\subsubsection{Policy and value networks} 
For the policy and value networks, we follow the definitions of DEIR. All the methods shares the same policy and value network structures.

\paragraph{MiniGrid}
\begin{description}
\item[CNN]\ \\
Conv2d(in=3, out=32, kernel=2, stride=1, pad=0),  \\
Conv2d(in=32, out=64, kernel=2, stride=1, pad=0),  \\
Conv2d(in=64, out=64, kernel=2, stride=1, pad=0),  \\
FC(in=1024, out=64).
\item[RNN]\ \\
GRU(in=64, out=64).
\item[MLP (value network)]\ \\
FC(in=64, out=128),\\
FC(in=128, out=1).
\item[MLP (policy network)]\ \\
FC(in=64, out=128),\\
FC(in=128, out=number of actions).

\end{description}
\textit{FC} stands for the fully connected linear layer, and \textit{Conv2d} refers to the 2-dimensional convolutional layer, \textit{GRU} is the gated recurrent units.

\paragraph{Crafter \& MiniWorld}
\begin{description}
\item[CNN]\ \\
Conv2d(in=3, out=32, kernel=8, stride=4, pad=0), \\
Conv2d(in=32, out=64, kernel=4, stride=2, pad=0), \\
Conv2d(in=64, out=64, kernel=4, stride=1, pad=0), \\
FC(in=576, out=64).
\item[RNN]\ \\
GRU(in=64, out=64).
\item[MLP (value network)]\ \\
FC(in=64, out=256),\\
FC(in=256, out=1).
\item[MLP (policy network)]\ \\
FC(in=64, out=256), \\
FC(in=256, out=number of actions).

\end{description}
\subsubsection{Quasimetric Network}
\label{appendix:mrn}
Our quasimetric network is based on the MRN~\citep{liu2023mrn}. It consists of both a symmetric and an asymmetric component, which together determine the distance between two states. The structure of this network is illustrated in Figure~\ref{fig:mrn}. Our potential network shares the same CNN as the quasimetric network, followed by an MLP that outputs a scalar value.

\begin{figure}[htbp]
    \centering
    \includegraphics[width=0.45\linewidth]{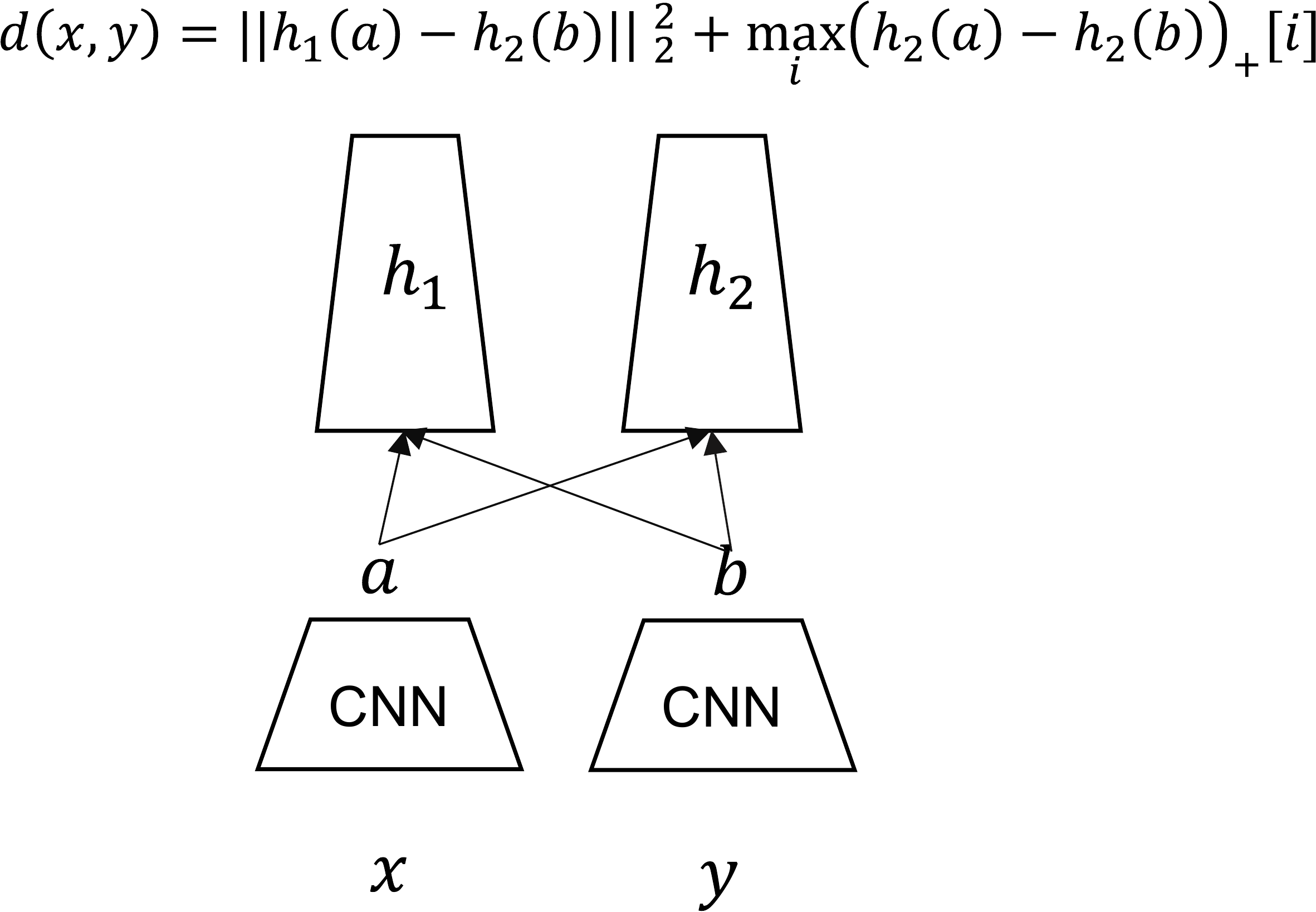}
    \caption{MRN Network}
    \label{fig:mrn}
\end{figure}

\subsection{Hyperparameters}
We found that applying batch normalization to all non-RNN layers could significantly boost the learning speed, especially in environments with stable observations, a finding also noted in the DEIR paper. We use Adam optimizer with $\epsilon=1e-5, \beta_1=0.9, \beta_2=0.999$. We normalized intrinsic rewards for all methods by subtracting the mean and dividing by the standard deviation.

For ETD, hyperparameters were initially tuned on DoorKey-8x8 and refined using KeyCorridorS6R3 and ObsturctedMaze results. For DEIR, we adopted their original hyperparameters but couldn't fully replicate their ObsturctedMaze-Full performance. Despite this, ETD still outperforms  original DEIR performance by a factor of two in sample efficiency in ObsturctedMaze-Full. Our NovelD implementation achieves the best performance reported in the literature. For the Count implementation, we use the episodic form $\mathbb{I}[N_e(s_t)=1]$ and found it superior to $1/\sqrt{N_e(s_t)}$. The hyperparameters for each method are summarized in following tables. Unless otherwise specified, the hyperparameters are consistent with those used for ETD.


\begin{table}[h]\label{tabhyp:etd}
    \centering
    \begin{tabular}{lm{2cm}m{2cm}m{4.8cm}}
        \toprule
        Hyperparameter                      & MultiRoom \newline DoorKey \newline KeyCorridor & ObsturctedMaze & Candidate Values                           
        \\ \midrule
         $\gamma$                         & 0.99     & 0.99       &      /       \\
        PPO $\lambda_{GAE}$                 & 0.95     & 0.95       &       /        \\
        PPO rollout steps                   & 512      & 512        &       /   \\
        PPO workers                         & 16       & 16         &        /                       \\
        PPO clip range                      & 0.2      & 0.2        &         /                \\
        PPO training epochs                 & 4        & 4          &        /                        \\
        PPO learning rate                   & 3e-4     & 3e-4          &       /                         \\
        model training epochs               & 8        & 4          & 1, 3, 4, 6, 8              \\
        mini-batch size                     & 512      & 512        &           /             \\
        entropy loss coef                   & 5e-4     & 1e-2       &      5e-4, 1e-2                  \\
        advantage normalization             & yes      & yes        &    /                               \\
        model learning rate                 & 3e-4     & 3e-4       & 3e-4, 1e-4, 5e-5, 1e-5, 5e-6    \\
        normalization for layers            & Batch Norm & Layer Norm & Batch Norm, Layer Norm, None           \\  
        extrinsic reward coef               & 1.0      & 10.0       & 1, 10 \\
        intrinsic reward coef               & 1e-2     & 1e-2       & 1e-2, 1e-3, 5e-3, 1e-4  \\ 
        \bottomrule
    \end{tabular}
    \caption{Hyperparameters for ETD in MiniGrid.}
    \label{tab:hypersETD}
\end{table}

\begin{table}[h]\label{tabhyp:noveld}
    \centering
    \begin{tabular}{lm{2cm}m{2cm}m{4.8cm}}
        \toprule
        Hyperparameter                      & MultiRoom \newline DoorKey \newline KeyCorridor & ObsturctedMaze & Candidate Values                           
        \\ \midrule
        PPO learning rate                   & 3e-4     & 1e-4          &      3e-4, 1e-4                         \\
        model training epochs               & 4        & 3          &     1, 3, 4, 6, 8           \\
        mini-batch size                     & 512      & 512        &           /             \\
        entropy loss coef                   & 1e-2     & 1e-2       &      5e-4, 1e-2                  \\
        model learning rate                 & 3e-4     & 3e-4       & 3e-4, 1e-4, 5e-5, 1e-5, 5e-6    \\
        normalization for layers            & Batch Norm & Layer Norm & Batch Norm, Layer Norm, None           \\  
        extrinsic reward coef               & 1.0      & 10.0       & 1, 10 \\
        intrinsic reward coef               & 3e-2     & 3e-3       & 1e-2, 1e-3, 5e-3, 1e-4  \\ 
        $\alpha$                           & 0.5      & 0.5          & / \\
        $\beta$                           & 0      & 0          & / \\
        \bottomrule
    \end{tabular}
    \caption{Hyperparameters for NovelD in MiniGrid.}
    \label{tab:hypersNovelD}
\end{table}

\begin{table}[h!]\label{tabhyp:deir}
    \centering
    \begin{tabular}{lm{2cm}m{2cm}m{4.8cm}}
        \toprule
        Hyperparameter                      & MultiRoom \newline DoorKey \newline KeyCorridor & ObsturctedMaze & Candidate Values                           
        \\ \midrule
        PPO rollout steps                   & 512      & 512        &       256, 512   \\
        PPO workers                         & 16       & 64         &        16, 64                       \\
        PPO learning rate                   & 3e-4     & 1e-4          &    /                        \\
        model training epochs               & 4        & 3          &      /         \\
        mini-batch size                     & 512      & 512        &     512, 2048             \\
        entropy loss coef                   & 1e-2     & 5e-4       &      /                \\
        model learning rate                 & 3e-4     & 3e-4       &     /   \\
        normalization for layers            & Batch Norm & Layer Norm & Batch Norm, Layer Norm, None           \\  
        extrinsic reward coef               & 1.0      & 10.0       &   / \\
        intrinsic reward coef               & 1e-2     & 1e-3       &  /  \\ 
        observation queue size              & 1e5      & 1e5        &  / \\
        \bottomrule
    \end{tabular}
    \caption{Hyperparameters for DEIR in MiniGrid.}
    \label{tab:hypersDEIR}
\end{table}

\begin{table}[h!]\label{tabhyp:count}
    \centering
    \begin{tabular}{lm{2cm}m{2cm}m{4.8cm}}
        \toprule
        Hyperparameter                      & MultiRoom \newline DoorKey \newline KeyCorridor & ObsturctedMaze & Candidate Values                           
        \\ \midrule
        PPO learning rate                   & 3e-4     & 3e-4          &    /                        \\
        mini-batch size                     & 512      & 512        &     /             \\
        entropy loss coef                   & 1e-2     & 5e-4       &      /                \\
        normalization for layers            & Batch Norm & Layer Norm & Batch Norm, Layer Norm, None           \\  
        extrinsic reward coef               & 1.0      & 10.0       &   / \\
        intrinsic reward coef               & 1e-2     & 1e-3       &  /  \\ 
        \bottomrule
    \end{tabular}
    \caption{Hyperparameters for Count in MiniGrid.}
    \label{tab:hypersCount}
\end{table}

\begin{table}[h!]\label{tabhyp:ec}
    \centering
    \begin{tabular}{lm{2cm}m{2cm}m{4.8cm}}
        \toprule
        Hyperparameter                      & MultiRoom \newline DoorKey \newline KeyCorridor & ObsturctedMaze & Candidate Values                           
        \\ \midrule
        PPO learning rate                   & 3e-4     & 3e-4          &    /                        \\
        mini-batch size                     & 512      & 512        &     /             \\
        entropy loss coef                   & 1e-2     & 1e-2       &      /                \\
        normalization for layers            & Batch Norm & Layer Norm & Batch Norm, Layer Norm, None           \\  
        extrinsic reward coef               & 1.0      & 10.0       &   / \\
        intrinsic reward coef               & 1e-2     & 1e-2       &  /  \\ 
        $\alpha$                            & 1     & 1       &  /  \\ 
        $\beta$                             & 1     & 1       &  /  \\ 
        $F$                                 & 90-th  percentil      & 1       &  /  \\                     
        \bottomrule
    \end{tabular}
    \caption{Hyperparameters for EC in MiniGrid.}
    \label{tab:hypersEC}
\end{table}

\begin{table}[h!]\label{tabhyp:e3b}
    \centering
    \begin{tabular}{lm{2cm}m{2cm}m{4.8cm}}
        \toprule
        Hyperparameter                      & MultiRoom \newline DoorKey \newline KeyCorridor & ObsturctedMaze & Candidate Values                           
        \\ \midrule
        PPO learning rate                   & 3e-4     & 3e-4          &    /                        \\
        mini-batch size                     & 512      & 512        &     /             \\
        entropy loss coef                   & 1e-2     & 1e-2       &      /                \\
        normalization for layers            & Batch Norm & Layer Norm & Batch Norm, Layer Norm, None           \\  
        extrinsic reward coef               & 1.0      & 10.0       &   / \\
        intrinsic reward coef               & 1e-2     & 1e-2       &  /  \\ 
        $\lambda$                            & 0.1     & 0.1       &  /  \\                 
        \bottomrule
    \end{tabular}
    \caption{Hyperparameters for E3B in MiniGrid.}
    \label{tab:hypersE3B}
\end{table}

\begin{table}[h!]\label{tabhyp:rnd}
    \centering
    \begin{tabular}{lm{2cm}m{2cm}m{4.8cm}}
        \toprule
        Hyperparameter                      & MultiRoom \newline DoorKey \newline KeyCorridor & ObsturctedMaze & Candidate Values                           
        \\ \midrule
        PPO learning rate                   & 3e-4     & 3e-4          &    /                        \\
        mini-batch size                     & 512      & 512        &     /             \\
        entropy loss coef                   & 1e-2     & 1e-2       &      /                \\
        normalization for layers            & Batch Norm & Layer Norm & Batch Norm, Layer Norm, None           \\  
        extrinsic reward coef               & 1.0      & 10.0       &   / \\
        intrinsic reward coef               & 3e-3     & 1e-2       &  /  \\       
        \bottomrule
    \end{tabular}
    \caption{Hyperparameters for RND in MiniGrid.}
    \label{tab:hypersRND}
\end{table}

\begin{table}[h!]\label{tabhyp:crafter}
    \centering
    \begin{tabular}{lm{2.9cm}m{2.9cm}m{2.9cm}}
        \toprule
        Hyperparameter                      & ETD       & NovelD    & DEIR    \\
        \midrule
         $\gamma$                           & 0.99      & 0.99      & 0.99      \\
        PPO $\lambda_{GAE}$                 & 0.95      & 0.95      & 0.95      \\
        PPO rollout steps                   & 512       & 512       & 512       \\
        PPO workers                         & 16        & 16        & 16        \\
        PPO clip range                      & 0.2       & 0.2       & 0.2       \\
        PPO training epochs                 & 4         & 4         & 4         \\
        PPO learning rate                   & 3e-4      & 3e-4      & 3e-4      \\
        model training epochs               & 4        & 4         & 4         \\
        mini-batch size                     & 512       & 512       & 512       \\
        entropy loss coef                   & 1e-2      & 1e-2      & 1e-2      \\
        advantage normalization             & yes       & yes       & yes       \\
        model learning rate                 & 1e-4      & 1e-4      & 1e-4      \\
        normalization for layers            & Layer Norm & Layer Norm & Layer Norm \\
        extrinsic reward coef               & 1.0      & 1.0       & 1.0       \\
        intrinsic reward coef               & 1e-2      & 1e-2      & 1e-2      \\
        \midrule
        $\alpha$                            & /         & 0.5       & /         \\
        $\beta$                             & /         & 0         & /         \\
        \midrule
        observation queue size              & /         & /        &  1e5       \\
        \bottomrule
    \end{tabular}
    \caption{Hyperparameters for ETD, NovelD and DEIR in Crafter.}
    \label{tab:hypersCrafter}
\end{table}

\begin{table}[h!]\label{tabhyp:miniworld}
    \centering
    \begin{tabular}{lm{2.9cm}m{2.9cm}m{2.9cm}}
        \toprule
        Hyperparameter                      & ETD       & NovelD    & DEIR    \\
        \midrule
         $\gamma$                           & 0.99      & 0.99      & 0.99      \\
        PPO $\lambda_{GAE}$                 & 0.95      & 0.95      & 0.95      \\
        PPO rollout steps                   & 512       & 512       & 512       \\
        PPO workers                         & 16        & 16        & 16        \\
        PPO clip range                      & 0.2       & 0.2       & 0.2       \\
        PPO training epochs                 & 4         & 4         & 4         \\
        PPO learning rate                   & 3e-4      & 3e-4      & 3e-4      \\
        model training epochs               & 16        & 4         & 4         \\
        mini-batch size                     & 512       & 512       & 512       \\
        entropy loss coef                   & 1e-2      & 1e-2      & 1e-2      \\
        advantage normalization             & yes       & yes       & yes       \\
        model learning rate                 & 1e-4      & 1e-4      & 1e-4      \\
        normalization for layers            & Layer Norm & Layer Norm & Layer Norm \\
        extrinsic reward coef               & 10.0      & 1.0       & 1.0       \\
        intrinsic reward coef               & 1e-2      & 1e-2      & 1e-2      \\
        \midrule
        $\alpha$                            & /         & 0.5       & /         \\
        $\beta$                             & /         & 0         & /         \\
        \midrule
        observation queue size              & /         & /        &  1e5       \\
        \bottomrule
    \end{tabular}
    \caption{Hyperparameters for ETD, NovelD and DEIR in MiniWorld.}
    \label{tab:hypersMiniWorld}
\end{table}

\begin{table}[!h]\label{tabhyp:continuous}
    \centering
    \begin{tabular}{lm{2.9cm}m{2.9cm}m{2.9cm}}
        \toprule
        Hyperparameter                      & ETD       & NovelD    & E3B    \\
        \midrule
         $\gamma$                           & 0.99      & 0.99      & 0.99      \\
        PPO $\lambda_{GAE}$                 & 0.95      & 0.95      & 0.95      \\
        PPO rollout steps                   & 512       & 512       & 512       \\
        PPO workers                         & 16        & 16        & 16        \\
        PPO clip range                      & 0.2       & 0.2       & 0.2       \\
        PPO training epochs                 & 4         & 4         & 4         \\
        PPO learning rate                   & 3e-4      & 3e-4      & 3e-4      \\
        model training epochs               & 4        & 4         & 4         \\
        mini-batch size                     & 512       & 512       & 512       \\
        entropy loss coef                   & 1e-2      & 1e-2      & 1e-2      \\
        advantage normalization             & yes       & yes       & yes       \\
        model learning rate                 & 3e-4      & 3e-4      & 3e-4      \\
        normalization for layers            & Layer Norm & Layer Norm & Layer Norm \\
        extrinsic reward coef               & 1.0      & 1.0       & 1.0       \\
        intrinsic reward coef               & 1e-2      & 1e-2      & 1e-2      \\
        \midrule
        $\alpha$                            & /         & 0.5       & /         \\
        $\beta$                             & /         & 0         & /         \\
        \midrule
        $\lambda_{\text{E3B}}$              & /         & /        &  0.1       \\
        \bottomrule
    \end{tabular}
    \caption{Hyperparameters for ETD, NovelD and E3B in DMC, MetaWorld and HalfCheetahVecSparse.}
    \label{tab:hypersCont}
\end{table}

\end{document}